\documentclass[conference]{IEEEtran}
\usepackage{xcolor}
\usepackage{url}
\usepackage{graphicx}
\usepackage{graphics}
\usepackage{amsmath}
\usepackage{amssymb}
\usepackage{amsthm}
\usepackage{bm}
\usepackage{algorithm}
\usepackage{algpseudocode}
\usepackage{booktabs}
\usepackage{tikz}
\usepackage{epsfig}
\usepackage{times}
\usetikzlibrary{calc,arrows.meta,angles,quotes,positioning,patterns}

\newcommand\copyrightnotice{%
\begin{tikzpicture}[remember picture,overlay]
  \node[anchor=north, yshift=-15pt] at (current page.north) {%
    \parbox{0.85\textwidth}{\centering\footnotesize
      This work has been submitted to the IEEE for possible publication.
      Copyright may be transferred without notice, after which this
      version may no longer be accessible.}};
\end{tikzpicture}}

\newtheorem{lemma}{Lemma}
\newtheorem{proposition}{Proposition}
\newtheorem{remark}{Remark}

\IEEEoverridecommandlockouts                              % This command is only needed if
\title{\LARGE \bf
Observability Analysis and Online Calibration of Visual-Inertial-Wheel Odometry for 4WIS4WID Mobile Robots
}

\author{Branimir Ćaran$^{1}$, Vladimir Milić$^{1}$,  Bojan Šekoranja$^{1}$, Bojan Jerbić$^{1,2}$% <-this % stops a space
\thanks{$^{1}$Authors are with Faculty of Mechanical Engineering and Naval Arhitecture, University of Zagreb, Zagreb, 10000, Croatia
        {\tt\small branimir.caran@fsb.unizg.hr}}
\thanks{$^{2}$ Bojan Jerbic is with Croatian Academy of Sciences and Arts, Zagreb, 10000, Croatia}
}

\begin{document}

\maketitle
\copyrightnotice
\thispagestyle{empty}
\pagestyle{empty}

%%%%%%%%%%%%%%%%%%%%%%%%%%%%%%%%%%%%%%%%%%%%%%%%%%%%%%%%%%%%%%%%%%%%%%%%%%%%%%%%
\begin{abstract}

In this paper, we present a visual-inertial-wheel odometry (VIWO) framework with online calibration for four-wheel independently steered and driven (4WIS4WID) mobile robots. We derive a 2D odometry model directly from the four driving velocities and steering angles, using both the longitudinal rolling constraints and the lateral no-slip constraints of all wheels. A preintegration model and analytical Jacobians are developed for efficient filtering and calibration. An observability analysis of the linearized VIWO system shows that a drive-only model makes all steering offsets unobservable, whereas the proposed redundant model restores their observability. The analysis also identifies four standard VINS unobservable directions and three additional directions associated with the arbitrary placement of the odometry reference frame. Furthermore, we characterize several degenerate motions, including zero yaw rate, constant steering, and a non-rolling wheel, and derive the corresponding excitation conditions for the thirteen wheel intrinsics that remain after fixing the odometry frame reference. The performance of the proposed system has been demonstrated in both simulation and real-world experiments on a 4WIS4WID mobile robot.
\end{abstract}

%%%%%%%%%%%%%%%%%%%%%%%%%%%%%%%%%%%%%%%%%%%%%%%%%%%%%%%%%%%%%%%%%%%%%%%%%%%%%%%%
\section{INTRODUCTION}
\label{sec:introduction}

Mobile robots with 4WIS4WID kinematics combine the terrain capabilities of conventional wheels with
omnidirectional planar motion. They have found applications in autonomous
vehicles~\cite{Jeong2022}, service robotics~\cite{Chen2025},
logistics~\cite{Hu2025} and inspection~\cite{Bozic2022}. Unlike
differential drive, Ackermann, and skid-steering platforms, independent
steering allows the instantaneous center of rotation (ICR) to be placed
arbitrarily in the plane~\cite{Lam2009}, enabling lateral motion, tight
turning, and continuous reconfiguration of the wheel constraint
directions~\cite{Muzuhno2024}. Fig.~\ref{fig:kinematics} shows the
resulting wheel geometry and the per-wheel decomposition of the
contact-point velocity.

\begin{figure}[t]
  \centering
  \includegraphics[width=252.0pt]{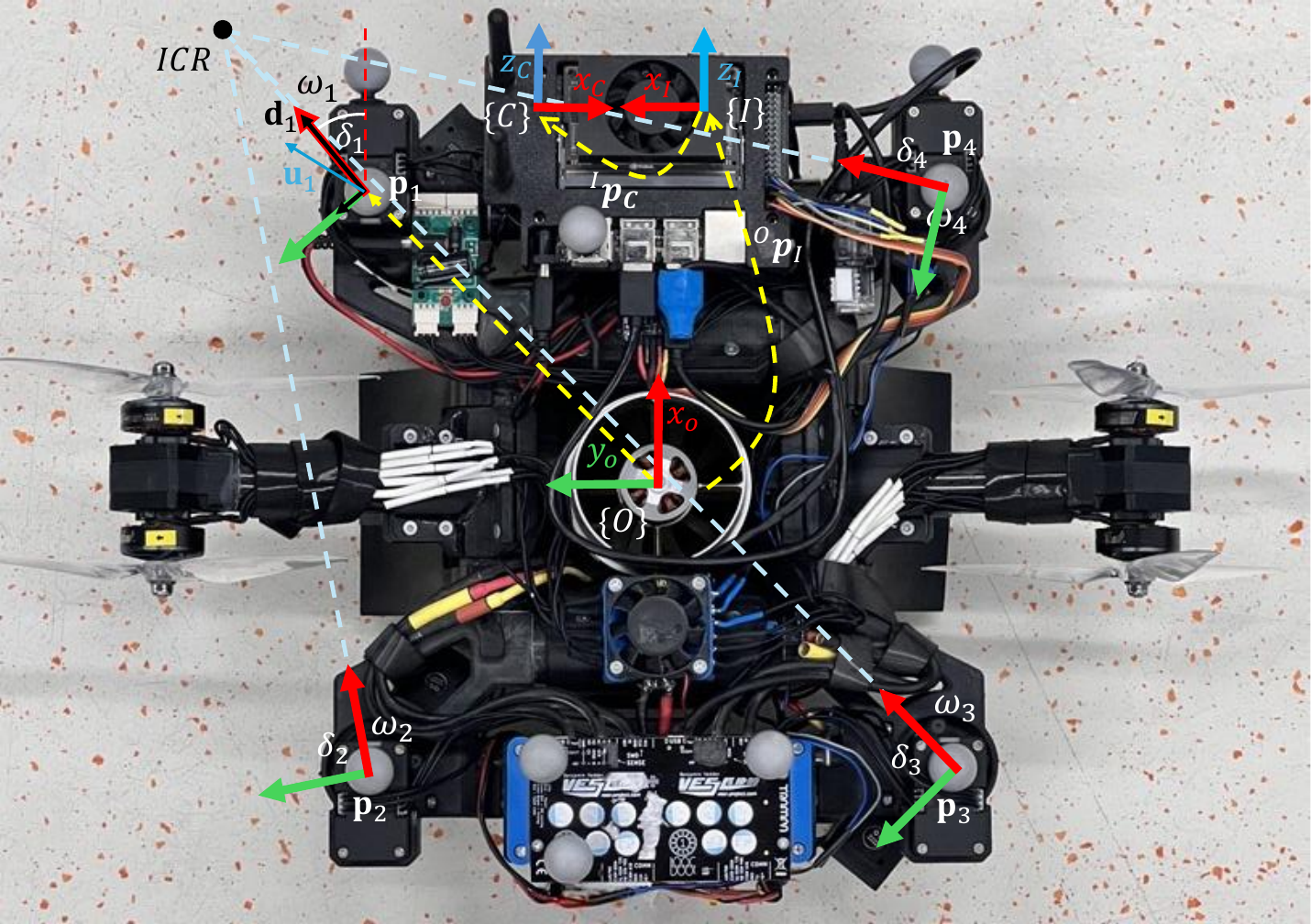}
  \caption{4WIS4WID mobile robot where each wheel $i$ is
  independently steered by $\delta_i$ and driven at $\omega_{d_i}$, so the
ICR may be placed anywhere in
  the plane.}
  \label{fig:kinematics}
\end{figure}

This maneuverability comes at the cost of substantially increased kinematic
complexity. The considered 4WIS4WID model contains sixteen intrinsic
parameters: four wheel radii, eight wheel positions and four
steering angle offsets~\cite{Lee2015JEng}. This is considerably more than in
commonly studied wheeled platforms such as differential drive, Ackermann, and
skid-steering robots~\cite{Siegwart2011} and introduces additional
calibration and observability challenges when wheel measurements are tightly
coupled with visual and inertial sensing.

Wheel encoders are commonly fused with visual-inertial odometry (VIO) to
improve state estimation for ground robots. Wu et al.~\cite{Wu2017}
incorporated wheel measurements together with planar and non-holonomic
constraints into a sliding-window estimator, showing that proprioceptive
odometry can resolve the scale ambiguity arising under constant acceleration
motion. Zhang et al.~\cite{Zhang2021TRO} formulated the motion manifold of
non-holonomic ground robots parametrically, enabling 6 DoF
state estimation from inherently planar wheel measurements.
Optimization based VIWO formulations have
also been proposed~\cite{Liu2024}, incorporating wheel measurements into
sliding-window optimization with initialization and extrinsic calibration.

These approaches generally assume a fixed and known wheel kinematic model.
Since wheel parameters may vary due to tire wear, load changes, pressure
variation, and mechanical inaccuracies, several works have considered their
online estimation together with the navigation state.
Martinelli~\cite{Martinelli2011} analysed the observability of wheel intrinsics
and planar sensor extrinsics and identified degenerate motion conditions,
while Censi et al.~\cite{Censi2013} studied simultaneous calibration of wheel
odometry and sensor parameters and characterized motions under which these
parameters are observable.

Lee et al.~\cite{Lee2020IROS} derive a 2D wheel odometry measurement model directly from raw
encoder readings, preintegrate it between camera clones, and jointly estimate
the wheel radii, baseline length, and odometry-IMU spatiotemporal extrinsics
within a Multi-State Constraint Kalman Filter (MSCKF). Their observability
analysis identifies several degenerate motions for the calibration
parameters. Related formulations have considered different kinematic
structures and sensing modalities. Zuo et al.~\cite{Zuo2024TASE} model
skid-steering platforms through ICR and estimate
time and terrain varying ICR parameters online, while tightly coupled
LiDAR-IMU-wheel formulations with online kinematic calibration have also
been investigated~\cite{Okawara2024}.

Observability analysis determines which states and calibration parameters can
be recovered from the available measurements and is fundamental to consistent
estimator design~\cite{Hesch2014,Huang2012}. For VIO systems,
Hesch et al.~\cite{Hesch2014} established the four standard unobservable
directions corresponding to global position and rotation about gravity.
Yang et al.~\cite{Yang2023TRO} extended this analysis to full sensor
self-calibration, showing that VINS with complete calibration of IMU and camera
intrinsics and spatiotemporal extrinsics retains these four unobservable
directions under sufficiently exciting motion. Related observability analyses
have been presented for aided inertial navigation with heterogeneous geometric
features~\cite{Yang2019TRO}, differential drive VIWO~\cite{Lee2020IROS},
skid-steering systems~\cite{Zuo2024TASE}, and optimization-based
formulations~\cite{Liu2024}.

Existing wheel-odometry calibration and observability analyses primarily consider differential drive, Ackermann, or skid-steering platforms,
whose ICR is constrained by the kinematic structure with a small number of parameters. 4WIS4WID platforms introduce sixteen intrinsic parameters and allow the ICR to move freely throughout the plane. Although 4WIS4WID robots have been extensively studied from the perspective of kinematics, dynamics, and control~\cite{Wang2026}, their online calibration and observability properties remain largely unexplored. Prior work on these robots~\cite{Caran2025ECMR} presented a loosely coupled pose estimator fusing wheel odometry, visual odometry, and inertial measurements, with the kinematic parameters calibrated offline and subsequently held fixed. To the best of the authors' knowledge, online calibration and observability analysis for this platform have not yet been addressed.

In this work, we develop a tightly coupled VIWO framework with online
calibration for 4WIS4WID mobile robots. The camera-IMU calibration and
odometry-IMU spatial and temporal measurement models follow established VIWO
formulations, allowing the analysis to focus on the previously unexplored
observability properties introduced by the 4WIS4WID kinematics and their
coupling with the VIO system. The main contributions of this work
are as follows:

\begin{itemize}

    \item We develop a preintegration based framework for online calibration of the 13 kinematic parameters of a 4WIS4WID robot and show that incorporating lateral no-slip constraints enables observability of the steering offsets, which remain unobservable in a drive-only model.

    \item We provide an observability analysis of the 4WIS4WID kinematic model, identifying three motion independent gauge freedoms associated with the $SE(2)$ placement of the odometry frame and characterizing motion dependent degeneracies that affect calibration of the remaining wheel intrinsics.

    \item We validate the proposed calibration and estimation framework in simulation and real world experiments, demonstrating improved pose estimation accuracy compared with using fixed kinematic parameters.

\end{itemize}
\section{4WIS4WID VISUAL-INERTIAL-WHEEL ODOMETRY}
\label{sec:viwo}
We employ the standard Multi-State Constraint Kalman Filter (MSCKF)
formulation~\cite{Mourikis2007}, consisting of the current inertial state
$\mathbf{x}_{I_k}$ and a sliding window of cloned IMU poses
$\mathbf{x}_{C_k}$. Camera-IMU spatiotemporal calibration is assumed known.
To enable online calibration of the wheel odometry, the MSCKF state is
augmented with the odometry-IMU extrinsics $\mathbf{x}_{WE}$, temporal offset
${}^Ot_I$, and the 4WIS4WID kinematic parameters $\mathbf{x}_{WI}$:
\begin{equation}
    \mathbf{x}_k =
    \begin{bmatrix}
        \mathbf{x}_{I_k}^\top &
        \mathbf{x}_{C_k}^\top &
        \mathbf{x}_{WE}^\top &
        {}^Ot_I &
        \mathbf{x}_{WI}^\top
    \end{bmatrix}^\top .
    \label{eq:augmented_state}
\end{equation}
The inertial and cloned states are
\begin{align}
    \mathbf{x}_{I_k} &=
    \begin{bmatrix}
        {}_{G}^{I_k}\bar q^\top &
        {}^{G}\mathbf{p}_{I_k}^\top &
        {}^{G}\mathbf{v}_{I_k}^\top &
        \mathbf{b}_g^\top &
        \mathbf{b}_a^\top
    \end{bmatrix}^\top, \\
    \mathbf{x}_{C_k} &=
    \begin{bmatrix}
        {}_{G}^{I_{k-1}}\bar q^\top &
        {}^{G}\mathbf{p}_{I_{k-1}}^\top &
        \cdots &
        {}_{G}^{I_{k-n}}\bar q^\top &
        {}^{G}\mathbf{p}_{I_{k-n}}^\top
    \end{bmatrix}^\top ,
\end{align}
where ${}_{G}^{I_k}\bar q$ represents the orientation of the IMU frame
$\{I\}$ with respect to the global frame $\{G\}$,
${}^{G}\mathbf{p}_{I_k}$ and ${}^{G}\mathbf{v}_{I_k}$ are the IMU position
and velocity, and $\mathbf{b}_g$ and $\mathbf{b}_a$ are the gyroscope and
accelerometer biases. The nominal and error states are related through
$\mathbf{x}=\hat{\mathbf{x}}\boxplus\tilde{\mathbf{x}}$.

IMU propagation, pose cloning, and visual feature updates follow the standard
MSCKF formulation~\cite{Mourikis2007}. The remainder of this section derives
the wheel odometry measurement model and the Jacobians required for online
calibration of the 4WIS4WID-specific parameters.

\subsection{4WIS4WID Kinematic Model}
\label{sec:kinematic_model}

Each wheel is equipped with steering and drive encoders providing
\begin{equation}
    \delta_{m_i} = \delta_i + \delta_{o_i} + n_{\delta_i},
    \qquad
    \omega_{md_i} = \omega_{d_i} + n_{d_i},
\end{equation}
where $\delta_i$ and $\omega_{d_i}$ are the true steering angle and drive
angular velocity of wheel $i$, $\delta_{o_i}$ is its constant steering offset,
and $n_{\delta_i}$ and $n_{d_i}$ are zero-mean measurement noises.
For $i=1,\ldots,4$, these measurements constrain the planar body twist
$\bm{\xi}=[v_x\ v_y\ \omega]^\top$ expressed in the odometry frame $\{O\}$.

Under the rigid body assumption, the velocity of the ground contact point of wheel $i$, expressed in the odometry frame $\{O\}$, is
\begin{equation}
    \begin{gathered}
    \mathbf{u}_i =
    \begin{bmatrix} v_x - y_{w_i}\omega \\ v_y + x_{w_i}\omega \end{bmatrix}
    = \begin{bmatrix} \mathbf{I}_2 & J\mathbf{p}_i \end{bmatrix} \bm\xi,
    \\
    J = \begin{bmatrix} 0 & -1 \\ 1 & 0 \end{bmatrix},
    \quad
    \mathbf{p}_i = \begin{bmatrix} x_{w_i} \\ y_{w_i} \end{bmatrix}.
    \end{gathered}
    \label{eq:contact_velocity}
\end{equation}
For a conventional wheel rolling without slipping, this velocity is constrained
to lie along the rolling direction
$\mathbf{d}_i = [\cos\delta_i\ \ \sin\delta_i]^\top$ with magnitude
$v_i = \omega_{d_i} r_i$, giving the standard 4WIS4WID kinematic
constraint $\mathbf{u}_i = \mathbf{d}_i\, v_i$ ~\cite{Lee2015JEng}.
Note that $\mathbf{u}_i$ comprises \emph{two} scalar equations
per wheel: the four wheels therefore impose eight constraints on a
2 DoF planar twist.

Premultiplying $\mathbf{u}_i$ by $R(\delta_i)^\top$ resolves the
constraint into the wheel's own frame and separates it into a longitudinal and
a lateral component,
\begin{equation}
    \mathbf{a}_i^\top \bm\xi = v_i,
    \qquad
    \mathbf{b}_i^\top \bm\xi = 0
    \label{eq:wheel_constraints}
\end{equation}
\begin{equation}
    \mathbf{a}_i^\top =
    \begin{bmatrix} \cos\delta_i & \sin\delta_i & c_i \end{bmatrix},
    \qquad
    \mathbf{b}_i^\top =
    \begin{bmatrix} -\sin\delta_i & \cos\delta_i & \bar c_i \end{bmatrix}
    \label{eq:ab_rows}
\end{equation}
\begin{equation}
    c_i = -y_{w_i}\cos\delta_i + x_{w_i}\sin\delta_i,
    \qquad
    \bar c_i = y_{w_i}\sin\delta_i + x_{w_i}\cos\delta_i
    \label{eq:c_definitions}
\end{equation}
The first of \eqref{eq:wheel_constraints} states that the contact-point velocity
projected onto the rolling direction equals the measured wheel speed; the second
states that its lateral component vanishes, i.e. the wheel does not slide
sideways.

Stacking \eqref{eq:wheel_constraints} over the four wheels yields
\begin{equation}
    \underbrace{\begin{bmatrix}
        \omega_{d_1} r_1 \\ \omega_{d_2} r_2 \\
        \omega_{d_3} r_3 \\ \omega_{d_4} r_4 \\ \mathbf{0}_{4\times1}
    \end{bmatrix}}_{\bar{\mathbf{v}}}
    =
    \underbrace{[\,\mathbf{a}_1\ \mathbf{a}_2\ \mathbf{a}_3\ \mathbf{a}_4\
        \mathbf{b}_1\ \mathbf{b}_2\ \mathbf{b}_3\ \mathbf{b}_4\,]^\top
        }_{\bar A(\bm\delta,\, \mathbf{x}_w,\, \mathbf{y}_w)
           \,=\, [A;\, B]}
    \underbrace{\begin{bmatrix} v_x \\ v_y \\ \omega \end{bmatrix}}_{\bm\xi}
    \label{eq:forward_map}
\end{equation}
and the body twist follows in a least-squares sense as
\begin{equation}
    \bm\xi = \bar A^\dagger \bar{\mathbf{v}}
           = (\bar A^\top \bar A)^{-1} \bar A^\top \bar{\mathbf{v}},
    \qquad
    \mathbf{e}_\perp = \bar{\mathbf{v}} - \bar A \bm\xi
    \label{eq:twist_ls}
\end{equation}
where $x_{w_i}$ and $y_{w_i}$ denote the position of wheel $i$ relative to the
baselink frame $\{O\}$. The residual $\mathbf{e}_\perp$ spans a
five-dimensional subspace and quantifies the extent to which the eight encoder
readings fail to be explained by any single rigid body twist, it is identically
zero for a kinematically exact, slip-free configuration. The pseudo-inverse in \eqref{eq:twist_ls} requires $\bar A\in\mathbb{R}^{8\times3}$ to have full column rank, $\operatorname{rank}(\bar A)=3$, so that $\bar A^\top\bar A$ is invertible.

\begin{remark}
\label{rem:lee2015}
We adopt the wheel frame form because it separates the
longitudinal and lateral constraints, whose distinct roles in parameter
observability are established in Section~\ref{sec:observability}. The first two
preclude the closed-form pseudo-inverse available under a symmetric
layout~\cite{Lee2015JEng}, so \eqref{eq:twist_ls} is evaluated numerically.
\end{remark}

\begin{remark}
\label{rem:naive_reduction}
Discarding the lateral rows of \eqref{eq:forward_map} yields the reduced map
$\mathbf{v} = A\bm\xi$, in which each wheel contributes only its rolling
measurement. This reduction is exact whenever the no-slip assumption holds, and
is the form implied by treating the four wheel speeds as the  odometric
measurements. Section~\ref{sec:lateral_necessity} shows that the steering offsets become
unobservable under every motion, and the reduced map loses rank under lateral
motion.
\end{remark}

\subsection{Odometry Preintegration}
Wheel odometry typically runs at 200-500Hz, performing EKF update for every measurment would be computationally expensive, wheel odometry measurments are therefore preintegrated between two consectuive camera frames at times $t_k$ and $t_{k+1}$ to obtain relative motion increasment. The increasment is expressed in the starting integration frame $\{O_k\}$ and re-initialized to zero at each new frame while 2D orientation evolves continously over integration interval and this evolving yaw is what rotates the body frame velocity into the position increment. Robot velocities are then integrated from $t_k$ to $t_{k+1}$ to obtain relative 2D pose measurment:
\begin{equation}
    \mathbf{z}_{k+1} =
\begin{bmatrix}
{}^{O_{k+1}}_{O_k}\theta \\[2pt]
{}^{O_k}\mathbf{p}_{O_{k+1}}
\end{bmatrix}
=
\begin{bmatrix}
\int_{t_k}^{t_{k+1}} {}^{O_t}\omega \, dt \\[4pt]
\int_{t_k}^{t_{k+1}} R\!\left({}^{O_t}_{O_k}\theta\right) {}^{O_t}\mathbf{v} \, dt
\end{bmatrix}
\label{eq:kinematic_model}
\end{equation}
\begin{equation}
    =: \mathbf{g}\big(\bm{\delta}_{i(k:k+1)},\ \bm{\omega}_{d_i(k:k+1)},\ \mathbf{x}_{WI}\big) \label{eq:odometry_measurements}
\end{equation}
where ${}^{O_t}\mathbf{v} = \begin{bmatrix} {}^{O_t}v_x & {}^{O_t}v_y \end{bmatrix}^T$, $R(\cdot)$ is the standard 2D rotation matrix, ${}^{O_\tau}_{O_k}\theta$ is local yaw angle, ${}^{O_k}{x}_{O_\tau}$ and ${}^{O_k}{y}_{O_\tau}$ are position of $\{O_{\tau}\}$ in starting frame  $\{O_{k}\}$. 

If all spatiotemporal parameters between the wheel odometry, camera, and IMU are known, the odometry measurements introduced above can be directly incorporated into the MSCKF framework. However, in practice, mobile robot kinematic parameters can vary over time due to inaccurate robot calibration, mechanical inaccuracies, and vibrations. To account for such time-varying parameters, this work performs online calibration of the 4WIS4WID robot's intrinsic $\mathbf{x}_{WI}$ and extrinsic $\mathbf{x}_{WE}$ parameters, including the kinematic parameters of the wheel odometry model and the spatial transformation between the odometry and IMU frames. The wheel odometry–IMU temporal offset model is adopted from~\cite{Lee2020IROS} and is not derived here, \(^{O}t_I\) remains part of the filter state and is estimated online. The camera–IMU spatiotemporal calibration is assumed known.
\subsection{Odometry Measurement with Intrinsics}
\label{sec:intrinsic_jacobians}

The integration \eqref{eq:odometry_measurements} depends on the wheel
intrinsics, and should in principle be repeated whenever a new estimate becomes
available which would reduce the efficiency of preintegration. We instead
linearise the preintegrated measurement about the current estimate
$\hat{\mathbf{x}}_{WI}$, accounting for both the linearisation error and the
encoder noise,
\begin{equation}
    \resizebox{\columnwidth}{!}{$\displaystyle
    \mathbf{z}_{k+1} \simeq
    \mathbf{g}\big(\bm\delta_{m(k:k+1)},\, \bm\omega_{md(k:k+1)},\,
                   \hat{\mathbf{x}}_{WI}\big)
    + \frac{\partial\mathbf{g}}{\partial\tilde{\mathbf{x}}_{WI}}
      \tilde{\mathbf{x}}_{WI}
    + \frac{\partial\mathbf{g}}{\partial\mathbf{n}_w}\mathbf{n}_w
    \label{eq:linearized_measurement}$}
\end{equation}
with the sixteen intrinsics collected as $\mathbf{x}_{WI} :=
    \begin{bmatrix} \mathbf{r}^\top & \mathbf{x}_w^\top & \mathbf{y}_w^\top
                    & \bm\delta_o^\top \end{bmatrix}^\top \in \mathbb{R}^{16}$,
$\mathbf{n}_w$ collects drive, steering and lateral slip noise $\mathbf{n}_\tau$ of all subintervals in $[t_k, t_{k+1}]$.
Three of these are unobservable for any motion and are fixed by convention which is established in Section~\ref{sec:observability}.

For full-column-rank $\bar A$, the sensitivity of the least-squares twist
$\bm\xi = \bar A^\dagger\bar{\mathbf{v}}$ to a parameter $p$ entering through
$\bar A$ is
\begin{equation}
    \frac{\partial\bm\xi}{\partial p}
    = -\bar A^\dagger D\bm\xi
      + (\bar A^\top\bar A)^{-1} D^\top \mathbf{e}_\perp,
    \qquad D := \frac{\partial\bar A}{\partial p}
    \label{eq:master_identity}
\end{equation}
the second term being present only because $\bar A$ has more rows than columns;
it vanishes identically for a square kinematic map.

Only rows $j$ and $4{+}j$ of $\bar A$ depend on the parameters of wheel $j$. It
is therefore convenient to collect the corresponding columns of the
pseudo-inverse and components of the residual,
\begin{equation}
    \bar A^\dagger_j := \begin{bmatrix} [\bar A^\dagger]_{:,j}
        & [\bar A^\dagger]_{:,4+j}\end{bmatrix} \in \mathbb{R}^{3\times2},
    \qquad
    \mathbf{e}_{\perp,j} := \begin{bmatrix} e_{\perp,j} \\
        e_{\perp,4+j}\end{bmatrix}
    \label{eq:per_wheel_blocks}
\end{equation}
Differentiating \eqref{eq:ab_rows} with respect to $\delta_j$ shows that the two
rows rotate into one another,
\begin{equation}
    \frac{\partial}{\partial\delta_j}
    \begin{bmatrix}\mathbf{a}_j^\top \\ \mathbf{b}_j^\top\end{bmatrix}
    = -J
    \begin{bmatrix}\mathbf{a}_j^\top \\ \mathbf{b}_j^\top\end{bmatrix},
    \qquad
    \frac{\partial c_j}{\partial\mathbf{p}_j^\top} = -\mathbf{n}_j^\top,
    \qquad
    \frac{\partial\bar c_j}{\partial\mathbf{p}_j^\top} = \mathbf{d}_j^\top
    \label{eq:row_derivatives}
\end{equation}
Applying \eqref{eq:master_identity} blockwise then gives the instantaneous
twist Jacobian $H_{\bm\xi,\mathbf{x}_{WI}} :=
\partial\bm\xi/\partial\mathbf{x}_{WI} \in \mathbb{R}^{3\times16}$. All
quantities below are evaluated at $\hat{\mathbf{x}}_{WI}$ and at the measured
$\delta_{m_j}$, $\omega_{md_j}$.

The radii enter $\bar{\mathbf{v}}$ only, leaving $\bar A$ unchanged:
\begin{equation}
    \frac{\partial\bm\xi}{\partial r_j} = \omega_{md_j}\,[\bar A^\dagger]_{:,j}
    \label{eq:jac_radii}
\end{equation}
The steering offsets enter $\bar A$ only, through
$\partial\delta_j/\partial\delta_{o_j} = -1$:
\begin{equation}
    \frac{\partial\bm\xi}{\partial\delta_{o_j}}
    = \bar A^\dagger_j
      \begin{bmatrix} \mathbf{b}_j^\top\bm\xi \\
                      -\mathbf{a}_j^\top\bm\xi \end{bmatrix}
    - (\bar A^\top\bar A)^{-1}
      \begin{bmatrix}\mathbf{b}_j & -\mathbf{a}_j\end{bmatrix}
      \mathbf{e}_{\perp,j}
    \label{eq:jac_steering}
\end{equation}
The wheel positions enter $\bar A$ through $c_j$ and $\bar c_j$:
\begin{equation}
    \frac{\partial\bm\xi}{\partial\mathbf{p}_j^\top}
    = \Big(\omega\,\bar A^\dagger_j
           - (\bar A^\top\bar A)^{-1}\mathbf{e}_3\,\mathbf{e}_{\perp,j}^\top
      \Big)
      \begin{bmatrix} \mathbf{n}_j^\top \\ -\mathbf{d}_j^\top \end{bmatrix}
    \label{eq:jac_position}
\end{equation}
whose two columns are placed in the $x_{w_j}$ and $y_{w_j}$ positions of
$H_{\bm\xi,\mathbf{x}_{WI}}$. $\mathbf{e}_3 = [0\ 0\ 1]^\top$ selects the yaw rate column of $\bar A$, the only column through which the wheel position enters; the two columns of \eqref{eq:jac_position} are placed in the $x_{w_j}$ and $y_{w_j}$ positions of $H_{\bm\xi,\mathbf{x}_{WI}}$.

\begin{remark}
\label{rem:position_rank}
The $2\times2$ factor in \eqref{eq:jac_position} is orthogonal, so both
components of $\mathbf{p}_j$ influence the twist independently at every
instant. Under the reduced map $\mathbf{v} = A\bm\xi$ this is not the case:
$x_{w_j}$ and $y_{w_j}$ then enter only through the single scalar $c_j$, and
only that combination is observable.
\end{remark}

Collect the drive and steering encoder noises together with a lateral term,
$\mathbf{n} = [\,\mathbf{n}_d^\top\ \ \mathbf{n}_\delta^\top\ \
\mathbf{n}_{\ell}^\top\,]^\top \in \mathbb{R}^{12}$, where
$\mathbf{n}_{\ell}$ enters the lower block of $\bar{\mathbf{v}}$ and represents
residual lateral slip not captured by the rolling constraint
$\mathbf{u}_i$. Since $\omega_{d_j} = \omega_{md_j} - n_{d_j}$
and $\partial\delta_j/\partial n_{\delta_j} =
\partial\delta_j/\partial\delta_{o_j} = -1$,
\begin{equation}
    \frac{\partial\bm\xi}{\partial n_{d_j}} = -\hat r_j\,[\bar A^\dagger]_{:,j},
    \quad
    \frac{\partial\bm\xi}{\partial n_{\delta_j}}
      = \frac{\partial\bm\xi}{\partial\delta_{o_j}},
    \quad
    \frac{\partial\bm\xi}{\partial n_{\ell_j}} = [\bar A^\dagger]_{:,4+j}
    \label{eq:jac_noise}
\end{equation}
so the steering block is obtained without additional computation. Stacking
these gives $H_{\bm\xi,\mathbf{n}} \in \mathbb{R}^{3\times12}$, with
$Q_\tau = \operatorname{diag}(\sigma_d^2\mathbf{I}_4,\
\sigma_\delta^2\mathbf{I}_4,\ \sigma_\ell^2\mathbf{I}_4)$.

Let $\tau$ index the encoder samples in $[t_k, t_{k+1}]$, with sub-interval
length $\Delta t = t_{\tau+1}-t_\tau$, and let
$\mathbf{g}_\tau = [\theta_\tau\ x_\tau\ y_\tau]^\top$ denote the increment
preintegrated up to $t_\tau$, so that $\mathbf{g}_k=\mathbf{0}$ and
$\mathbf{g}_{k+1}=\mathbf{g}$.
The intrinsics affect every sub-interval of the preintegration, so
$\partial\mathbf{g}/\partial\tilde{\mathbf{x}}_{WI}$ must be accumulated
alongside the increment itself. With $\theta_m := \theta_\tau - \omega\Delta
t/2$ and $\rho := (2/\omega) \sin(\omega \Delta t / 2)$ the
position increment over the sub-interval is
\begin{equation}
\begin{split}
    \Delta x &= \rho\,(v_x\cos\theta_m + v_y\sin\theta_m), \\[4pt]
    \Delta y &= \rho\,(-v_x\sin\theta_m + v_y\cos\theta_m)
\end{split}
\label{eq:preint_closed_form}
\end{equation}
This is the singularity-free form of the position increment: the body velocity
rotated by the mid-interval yaw $\theta_m$ and scaled by the arc factor $\rho$.
It is obtained from the exact integral of \eqref{eq:kinematic_model} over
$[t_\tau, t_{\tau+1}]$ by applying sum-to-product identities, which removes the
$1/\omega$ singularity present in the direct evaluation and is therefore valid
for the zero-yaw-rate motions analysed in Section~\ref{sec:degenerate}.

The increment Jacobians with respect to the running pose and the twist are
\begin{equation}
\begin{split}
    \Phi_{tr,\tau} &=
    \begin{bmatrix} 1 & 0 & 0 \\ \Delta y & 1 & 0 \\
                    -\Delta x & 0 & 1 \end{bmatrix}, \\[4pt]
    \Phi_{\bm\xi,\tau} &=
    \begin{bmatrix}
        0 & 0 & -\Delta t \\
        \rho\cos\theta_m & \rho\sin\theta_m &
            \frac{\rho'}{\rho}\Delta x - \frac{\Delta t}{2}\Delta y \\
        -\rho\sin\theta_m & \rho\cos\theta_m &
            \frac{\rho'}{\rho}\Delta y + \frac{\Delta t}{2}\Delta x
    \end{bmatrix}
\end{split}
\label{eq:Phi_tr_xi}
\end{equation}
with $\rho' = \tfrac{\Delta t^2}{2}(z\cos z - \sin z)/z^2$ and $z :=
\omega\Delta t/2$. Composing with the instantaneous Jacobians,
\begin{equation}
    \Phi_{WI,\tau} = \Phi_{\bm\xi,\tau} H_{\bm\xi,\mathbf{x}_{WI},\tau},
    \qquad
    \Phi_{n,\tau} = \Phi_{\bm\xi,\tau} H_{\bm\xi,\mathbf{n},\tau}
    \label{eq:Phi_WI_n}
\end{equation}
and propagating from zero initial conditions at $\tau = k$,
\begin{align}
    \frac{\partial\mathbf{g}_{\tau+1}}{\partial\tilde{\mathbf{x}}_{WI}}
    &= \Phi_{tr,\tau}
       \frac{\partial\mathbf{g}_\tau}{\partial\tilde{\mathbf{x}}_{WI}}
       + \Phi_{WI,\tau}
    \label{eq:dg_dxWI_recursion} \\
    P_{m,\tau+1} &= \Phi_{tr,\tau} P_{m,\tau} \Phi_{tr,\tau}^\top
       + \Phi_{n,\tau} Q_\tau \Phi_{n,\tau}^\top
    \label{eq:Pm_recursion}
\end{align}
yields at $\tau = k{+}1$ the quantities required in
\eqref{eq:linearized_measurement}. Factoring the kinematics-agnostic
integration Jacobian $\Phi_{\bm\xi,\tau}$ from the kinematics-specific twist
Jacobians is what keeps the sixteen-parameter case tractable; the inlined form
used for differential drive VIWO does not scale to this many
intrinsics.

\subsection{Odometry Measurement with Extrinsics}
\label{sec:extrinsics}

The preintegrated measurement \eqref{eq:odometry_measurements} is a 2D relative
pose in the odometry frame $\{O\}$, whereas the filter state contains 3D IMU poses. Relating the two requires the
odometry-IMU spatial extrinsics $\mathbf{x}_{WE}$ and the temporal offset
${}^Ot_I$.

Since this relation is independent of the wheel kinematic model, we adopt the
measurement function and corresponding Jacobians derived in
\cite{Lee2020IROS} without modification. We denote the predicted
odometry measurement as
\begin{equation}
    \mathbf{h}_{k+1}
    =
    h\!\left(
        \mathbf{x}_{I_{k+1}},
        \mathbf{x}_{C_{k+1}},
        \mathbf{x}_{WE},
        {}^Ot_I
    \right),
    \label{eq:h_extrinsics}
\end{equation}
whose first-order perturbation can be written as
\begin{equation}
    \delta\mathbf{h}_{k+1}
    \simeq
    \mathbf{H}_{I}\tilde{\mathbf{x}}_{I_{k+1}}
    +
    \mathbf{H}_{C}\tilde{\mathbf{x}}_{C_{k+1}}
    +
    \mathbf{H}_{WE}\tilde{\mathbf{x}}_{WE}
    +
    \mathbf{H}_{t}\,{}^O\tilde t_I ,
    \label{eq:h_extrinsics_linearized}
\end{equation}
where $\mathbf{H}_{I}$, $\mathbf{H}_{C}$, $\mathbf{H}_{WE}$, and
$\mathbf{H}_{t}$ are the corresponding Jacobian blocks given in
\cite{Lee2020IROS}. Although unchanged from the differential drive
formulation, these blocks are retained in the complete measurement Jacobian
since the odometry-IMU extrinsics and temporal offset enter the observability
analysis of Section~\ref{sec:observability}.

\begin{remark}
\label{rem:gauge}
The extrinsics $\mathbf{x}_{WE}$ and the wheel geometry
$\{\mathbf{x}_w,\mathbf{y}_w,\bm\delta_o\}$ are not independently observable.
The placement of $\{O\}$ on the robot body is an arbitrary choice of
coordinates. Therefore, any planar rigid transformation of the wheel array can
be absorbed by a compensating change in ${}^O_I\bar q$ and
${}^O\mathbf{p}_I$ without altering the measurements. This introduces three
motion-independent unobservable directions corresponding to the $SE(2)$
freedom in the placement of $\{O\}$, which are characterised in
Section~\ref{sec:observability}.
\end{remark}

\subsection{Odometry Measurement Update}
\label{sec:odom_update}

Sections~\ref{sec:intrinsic_jacobians} and \ref{sec:extrinsics} supply the two halves
of the update: the preintegrated measurement $\mathbf{z}_{k+1}$ with its
intrinsic Jacobian $\partial\mathbf{g}/\partial\tilde{\mathbf{x}}_{WI}$ and
accumulated covariance $\mathbf{P}_m$, and the prediction $\mathbf{h}(\cdot)$
relating that 2D relative pose to the 3D state. Equating them,
\begin{equation}
    \resizebox{\columnwidth}{!}{$\displaystyle
    \mathbf{g}\big(\bm\delta_{m(k:k+1)},\, \bm\omega_{md(k:k+1)},\,
                   \mathbf{x}_{WI}\big)
    = \mathbf{h}\big(\mathbf{x}_{I_{k+1}}, \mathbf{x}_{C_{k+1}},
                     \mathbf{x}_{WE}, {}^Ot_I\big)
    \label{eq:update_constraint}
    $}
\end{equation}
and linearising each side about the current estimates, the left via
\eqref{eq:linearized_measurement}, the right via
\eqref{eq:h_extrinsics_linearized} which yields the residual
\begin{equation}
    \resizebox{\columnwidth}{!}{$\displaystyle
    \tilde{\mathbf{z}}_{k+1} :=
    \mathbf{g}\big(\bm\delta_{m(k:k+1)},\, \bm\omega_{md(k:k+1)},\,
                   \hat{\mathbf{x}}_{WI}\big)
    - \mathbf{h}\big(\hat{\mathbf{x}}_{I_{k+1}}, \hat{\mathbf{x}}_{C_{k+1}},
                     \hat{\mathbf{x}}_{WE}, {}^O\hat t_I\big)
    \label{eq:odom_residual}
    $}
\end{equation}
and the linearised measurement model
\begin{equation}
    \resizebox{\columnwidth}{!}{$\displaystyle
    \tilde{\mathbf{z}}_{k+1} \simeq
    \underbrace{
    \begin{bmatrix}
        \dfrac{\partial\mathbf{h}}{\partial\tilde{\mathbf{x}}_{I}} &
        \dfrac{\partial\mathbf{h}}{\partial\tilde{\mathbf{x}}_{C}} &
        \dfrac{\partial\mathbf{h}}{\partial\tilde{\mathbf{x}}_{WE}} &
        \dfrac{\partial\mathbf{h}}{\partial{}^O\tilde t_I} &
        -\dfrac{\partial\mathbf{g}}{\partial\tilde{\mathbf{x}}_{WI}}
    \end{bmatrix}}_{\mathbf{H}_{k+1}}
    \tilde{\mathbf{x}}_{k+1}
    - \frac{\partial\mathbf{g}}{\partial\mathbf{n}_w}\mathbf{n}_w
    \label{eq:odom_linearized}
    $}
\end{equation}
with block ordering following \eqref{eq:augmented_state}. The intrinsic block
carries a negative sign because the intrinsics enter through the measurement
rather than its prediction, and has dimension $3\times16$ against $3\times3$
for a differential drive platform; all remaining blocks are structurally
identical to that case.

The effective measurement noise is the accumulated covariance $\mathbf{P}_m$
from \eqref{eq:Pm_recursion}, which already propagates the driving, steering
and lateral-slip noise through the least-squares twist recovery.
Following~\cite{Lee2020IROS}, candidate measurements are screened by a
Mahalanobis distance test with three degrees of freedom,
\begin{equation}
    \tilde{\mathbf{z}}_{k+1}^\top
    \big(\mathbf{H}_{k+1}\mathbf{P}_{k+1|k}\mathbf{H}_{k+1}^\top
         + \mathbf{P}_m\big)^{-1}
    \tilde{\mathbf{z}}_{k+1} < \chi^2_{3,\,\alpha}
    \label{eq:chi2}
\end{equation}
so that intervals corrupted by unmodelled effects such as wheel slippage are
rejected.

\begin{remark}
\label{rem:eperp_gating}
Unlike differential drive, the 4WIS4WID system provides a consistency
signal through $\mathbf{e}_\perp$. While zero for a square kinematic map,
$\mathbf{e}_\perp$ spans a five dimensional subspace here and detects encoder
measurements inconsistent with a rigid body twist. Thus, timesteps can be
gated on $\|\mathbf{e}_\perp\|$ during preintegration, rejecting slip before it
corrupts the accumulated increment.
\end{remark}

\section{Observability Analysis}
\label{sec:observability}

Observability analysis determines which states and parameters are recoverable
from a given measurement and is a prerequisite for consistent
estimator design~\cite{Hesch2014, Huang2012}. Following~\cite{Lee2020IROS} we
consider a state containing a single cloned pose and a single point feature,
\begin{equation}
    \resizebox{\columnwidth}{!}{$\displaystyle
    \mathbf{x}_k =
    \begin{bmatrix} \mathbf{x}_{I_k}^\top & \mathbf{x}_{C_k}^\top
        & \mathbf{x}_{etc}^\top \end{bmatrix}^\top,
    \;\;
    \mathbf{x}_{etc} =
    \begin{bmatrix} \mathbf{x}_{WE}^\top & {}^Ot_I & \mathbf{x}_{WI}^\top
        & {}^G\mathbf{p}_f^\top \end{bmatrix}^\top
    \label{eq:obs_state}
    $}
\end{equation}
for which the observability matrix over $[t_0, t_k]$ is
\begin{equation}
    \mathbf{M} = \begin{bmatrix}
        \mathbf{H}_0^\top &
        \big(\mathbf{H}_1\bm\Phi_{(t_1,t_0)}\big)^\top & \cdots &
        \big(\mathbf{H}_k\bm\Phi_{(t_k,t_0)}\big)^\top
    \end{bmatrix}^\top
    \label{eq:observability_matrix}
\end{equation}
with $\mathbf{H}_k$ the stacked visual and wheel measurement Jacobians
\eqref{eq:odom_linearized}. Any $\mathbf{n}$ with $\mathbf{M}\mathbf{n} =
\mathbf{0}$ is an unobservable direction: perturbing the initial state along
$\mathbf{n}$ leaves every measurement unchanged, so no estimator can resolve it
and the corresponding covariance is unbounded.

The state transition matrix $\bm\Phi_{(t_k,t_0)}$ is not immediate when clones
are included in the state, and we adopt the construction of~\cite{Lee2020IROS},
in which cloning and propagation are unified into a single linear map and the
coincidence of the initial IMU pose and initial clone is imposed as an
error-state constraint:
\begin{equation}
\resizebox{\columnwidth}{!}{$\displaystyle
    \tilde{\mathbf{x}}_{k+1} =
    \begin{bmatrix}
        \bm\Phi_{I_{11}} & \mathbf{0}_3 & \mathbf{0}_3
            & \bm\Phi_{I_{14}} & \mathbf{0}_3
            & \mathbf{0}_3 & \mathbf{0}_3 & \mathbf{0}_{3\times26} \\
        \bm\Phi_{I_{21}} & \mathbf{I}_3
            & \bm\Phi_{I_{23}} & \bm\Phi_{I_{24}} & \bm\Phi_{I_{25}}
            & \mathbf{0}_3 & \mathbf{0}_3 & \mathbf{0}_{3\times26} \\
        \bm\Phi_{I_{31}} & \mathbf{0}_3 & \mathbf{I}_3
            & \bm\Phi_{I_{34}} & \bm\Phi_{I_{35}}
            & \mathbf{0}_3 & \mathbf{0}_3 & \mathbf{0}_{3\times26} \\
        \mathbf{0}_3 & \mathbf{0}_3 & \mathbf{0}_3 & \mathbf{I}_3
            & \mathbf{0}_3 & \mathbf{0}_3 & \mathbf{0}_3
            & \mathbf{0}_{3\times26} \\
        \mathbf{0}_3 & \mathbf{0}_3 & \mathbf{0}_3 & \mathbf{0}_3
            & \mathbf{I}_3 & \mathbf{0}_3 & \mathbf{0}_3
            & \mathbf{0}_{3\times26} \\
        \mathbf{0}_3 & \mathbf{0}_3 & \mathbf{0}_3
            & \bm\Psi_{C_{14}} & \mathbf{0}_3
            & \bm\Psi_{C_{11}} & \mathbf{0}_3 & \mathbf{0}_{3\times26} \\
        \mathbf{0}_3 & \mathbf{0}_3 & \bm\Psi_{C_{23}}
            & \bm\Psi_{C_{24}} & \bm\Psi_{C_{25}}
            & \bm\Psi_{C_{21}} & \mathbf{I}_3 & \mathbf{0}_{3\times26} \\
        \mathbf{0}_{26\times3} & \mathbf{0}_{26\times3} & \mathbf{0}_{26\times3}
            & \mathbf{0}_{26\times3} & \mathbf{0}_{26\times3}
            & \mathbf{0}_{26\times3} & \mathbf{0}_{26\times3} & \mathbf{I}_{26}
    \end{bmatrix}
    \tilde{\mathbf{x}}_0
$}
    \label{eq:state_transition}
\end{equation}
where all blocks are evaluated over $(t_{k+1},t_0)$, the block rows correspond
in order to the IMU orientation, position, velocity, gyroscope and
accelerometer biases, the cloned orientation and position, and finally
$\mathbf{x}_{etc}$. The $\bm\Phi_{I_{\ast}}$ are the standard inertial
error-state transition blocks and $\bm\Psi_{C_{\ast}}(t_{k+1},t_0) =
\bm\Phi_{I_{\ast}}(t_k,t_0)$ encode the clone constraint, both given explicitly
in~\cite{Lee2020IROS}. Two observations justify reusing
\eqref{eq:state_transition} without rederivation: $\bm\Phi$ describes only
inertial propagation and the clone/marginalise bookkeeping of the sliding
window, so no wheel measurement enters it and it is independent of the
kinematic model; and all calibration parameters are constant, so their
transition block is the identity regardless of their number. Only its dimension
changes, from $\mathbf{I}_{13}$ for the differential drive case to
$\mathbf{I}_{26}$ here.

\subsection{Structure of the Wheel Measurement Jacobian}
\label{sec:jacobian_structure}

Observability is evaluated at the true state, where the calibration is exact
and no slippage occurs. Hence $\mathbf{e}_\perp = \mathbf{0}$,
$\mathbf{a}_j^\top\bm\xi = v_j$ and $\mathbf{b}_j^\top\bm\xi = 0$, so the
second term of \eqref{eq:master_identity} vanishes identically and
\eqref{eq:jac_radii}-\eqref{eq:jac_position} reduce as follows.

\begin{proposition}[Per-wheel constraint structure]
\label{prop:per_wheel}
At the true state, the four intrinsics of wheel $j$ influence the recovered
twist only through the two columns collected in $\bar A^\dagger_j$ of
\eqref{eq:per_wheel_blocks}:
\begin{equation}
    \resizebox{\columnwidth}{!}{$\displaystyle
    \frac{\partial\bm\xi}{\partial\big[\tilde r_j\;\; \tilde\delta_{o_j}\;\;
        \tilde x_{w_j}\;\; \tilde y_{w_j}\big]}
    = \bar A^\dagger_j\, \mathbf{C}_j,
    \quad
    \mathbf{C}_j = \begin{bmatrix}
        \omega_{md_j} & 0 & \omega\,{\mathbf{d}_j^\perp}^\top \\[2pt]
        0 & -v_j & -\omega\,\mathbf{d}_j^\top
    \end{bmatrix}
    \label{eq:Cj}$}
\end{equation}
\end{proposition}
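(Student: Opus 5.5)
The plan is to specialise the three instantaneous Jacobians \eqref{eq:jac_radii}, \eqref{eq:jac_steering} and \eqref{eq:jac_position} to the true state. There the calibration is exact and no slip occurs, so the preceding discussion gives $\mathbf{e}_\perp=\mathbf{0}$, $\mathbf{a}_j^\top\bm\xi=v_j$ and $\mathbf{b}_j^\top\bm\xi=0$. I will then show that each column is $\bar A^\dagger_j$ times a $2$-vector, and read off $\mathbf{C}_j$ column by column. Since only rows $j$ and $4{+}j$ of $\bar A$ and of $\bar{\mathbf{v}}$ involve the parameters of wheel $j$, every term lands in the span of the two columns $[\bar A^\dagger]_{:,j}$ and $[\bar A^\dagger]_{:,4+j}$. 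This is what the factorisation through $\bar A^\dagger_j$ expresses.

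\textbf{Radius.} By \eqref{eq:jac_radii}, $\partial\bm\xi/\partial r_j=\omega_{md_j}[\bar A^\dagger]_{:,j}=\bar A^\dagger_j[\omega_{md_j}\;\,0]^\top$, which is the first column of $\mathbf{C}_j$. No residual term is involved, because $r_j$ enters only $\bar{\mathbf{v}}$.

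\textbf{Steering offset.} In \eqref{eq:jac_steering} the second term carries $\mathbf{e}_{\perp,j}=\mathbf{0}$ and drops. In the first term, substitute $\mathbf{b}_j^\top\bm\xi=0$ and $\mathbf{a}_j^\top\bm\xi=v_j$. This gives $\bar A^\dagger_j[0\;\,-v_j]^\top$, the second column of $\mathbf{C}_j$.

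\textbf{Position.} In \eqref{eq:jac_position} the $\mathbf{e}_3\mathbf{e}_{\perp,j}^\top$ term also vanishes, leaving $\omega\,\bar A^\dagger_j[\mathbf{n}_j^\top;\,-\mathbf{d}_j^\top]$. It remains to identify $\mathbf{n}_j$ with $\mathbf{d}_j^\perp$. I will do this directly from \eqref{eq:c_definitions}: $\partial c_j/\partial\mathbf{p}_j^\top=[\sin\delta_j\;\,-\cos\delta_j]=-\mathbf{n}_j^\top$, so $\mathbf{n}_j=[-\sin\delta_j\;\,\cos\delta_j]^\top=J\mathbf{d}_j=\mathbf{d}_j^\perp$. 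The same computation confirms $\partial\bar c_j/\partial\mathbf{p}_j^\top=\mathbf{d}_j^\top$, consistent with \eqref{eq:row_derivatives}. Placing these two columns in the $x_{w_j},y_{w_j}$ slots gives the last $2\times2$ block of $\mathbf{C}_j$.

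The only step needing care is this last one. It requires checking the sign conventions of \eqref{eq:row_derivatives} against \eqref{eq:c_definitions}, and checking that the yaw-rate column is the sole entry point of $\mathbf{p}_j$, so that $D=\mathbf{e}_j'\,(\partial c/\partial\mathbf{p})\,\mathbf{e}_3^\top$ produces the factor $\omega=\mathbf{e}_3^\top\bm\xi$. I would verify this by writing $D$ explicitly for each coordinate of $\mathbf{p}_j$ and applying $-\bar A^\dagger D\bm\xi$ from \eqref{eq:master_identity}.

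Finally, at the true state $\omega_{md_j}=\omega_{d_j}$ (noise-free), so $v_j=\omega_{md_j}r_j$. This keeps the entries of $\mathbf{C}_j$ consistent with the first row of \eqref{eq:forward_map}.
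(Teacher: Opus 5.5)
Your proposal is correct and follows the paper's own route. The paper likewise obtains the proposition by substituting $\mathbf{e}_\perp=\mathbf{0}$, $\mathbf{a}_j^\top\bm\xi=v_j$ and $\mathbf{b}_j^\top\bm\xi=0$ into \eqref{eq:jac_radii}--\eqref{eq:jac_position}, so the residual terms drop and each column factors through $\bar A^\dagger_j$; your check from \eqref{eq:c_definitions} that $\mathbf{n}_j=J\mathbf{d}_j=\mathbf{d}_j^\perp$ makes explicit an identification the paper leaves implicit.
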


Since $\mathrm{rank}(\mathbf{C}_j) \le 2$, at least two distinct steering
configurations are required before all four intrinsics of a wheel can be
identified. The entries of \eqref{eq:Cj} also indicate the excitation each
parameter demands: the radius sensitivity scales with the wheel rate
$\omega_{md_j}$, the steering offset sensitivity with the wheel speed $v_j$,
and both position sensitivities with the yaw rate $\omega$. Every degeneracy
below is a mechanism by which $\bigcup_t \mathrm{rowspace}(\mathbf{C}_j(t))$
fails to span $\mathbb{R}^4$.

\subsection{Necessity of the Lateral Constraints}
\label{sec:lateral_necessity}

Remark~\ref{rem:naive_reduction} noted that discarding the lateral rows of
\eqref{eq:forward_map} yields a reduced map $\mathbf{v} = A\bm\xi$, exact under
the no-slip assumption. It is nonetheless inadmissible for calibration.

\begin{lemma}
\label{lem:drive_only}
Under $\bm\xi = A^\dagger\mathbf{v}$, the steering offsets $\bm\delta_o$ are
unobservable under every motion.
\end{lemma}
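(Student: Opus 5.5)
The plan is to show that, under the reduced map $\bm\xi = A^\dagger\mathbf{v}$, the column of the intrinsic Jacobian belonging to each $\delta_{o_j}$ vanishes identically at the true state. That column then contributes nothing to $\mathbf{M}$ in \eqref{eq:observability_matrix}, so each unit vector along $\tilde\delta_{o_j}$ lies in its null space.

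\textbf{Step 1: sensitivity of the reduced twist.} I would apply the sensitivity identity \eqref{eq:master_identity} with $\bar A$ replaced by the $4\times3$ matrix $A$, which is assumed to have full column rank. The residual term there is $(A^\top A)^{-1}D^\top(\mathbf{v}-A\bm\xi)$. At the true state the rolling constraints $\mathbf{a}_j^\top\bm\xi = v_j$ of \eqref{eq:wheel_constraints} hold exactly, so this residual is zero. Hence
\[
\frac{\partial\bm\xi}{\partial\delta_{o_j}} = -A^\dagger D_j\bm\xi .
\]

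\textbf{Step 2: compute $D_j$.} Only row $j$ of $A$, namely $\mathbf{a}_j^\top$, depends on $\delta_{o_j}$. By \eqref{eq:row_derivatives}, $\partial\mathbf{a}_j^\top/\partial\delta_j = \mathbf{b}_j^\top$, which is the first row of $-J[\mathbf{a}_j^\top;\mathbf{b}_j^\top]$. Combining this with $\partial\delta_j/\partial\delta_{o_j}=-1$ gives
\[
D_j = -\mathbf{e}_j\mathbf{b}_j^\top ,
\qquad
D_j\bm\xi = -\mathbf{e}_j\,(\mathbf{b}_j^\top\bm\xi).
\]

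\textbf{Step 3: this is the key step.} The true motion satisfies the physical lateral no-slip constraint $\mathbf{b}_j^\top\bm\xi = 0$, even though the reduced estimator never uses it. Therefore $D_j\bm\xi=\mathbf{0}$, and so $\partial\bm\xi/\partial\delta_{o_j}=\mathbf{0}$ at every encoder sample. This holds independently of the steering angles, wheel speeds or yaw rate, which is why the conclusion covers every motion. Geometrically, a small offset error rotates the assumed rolling direction, and the measured speed changes only to second order because the true contact velocity is parallel to $\mathbf{d}_j$. This is the step I expect to need the most care. One must argue that the Jacobian is evaluated at the true state, that the reduced model is exact there (Remark~\ref{rem:naive_reduction}), and that the vanishing is structural rather than a cancellation particular to one motion.

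\textbf{Step 4: propagate through preintegration and the observability matrix.} With $H_{\bm\xi,\delta_{o_j},\tau}=\mathbf{0}$ for all $\tau$, \eqref{eq:Phi_WI_n} gives $\Phi_{WI,\tau}$ a zero $\delta_{o_j}$ column. Starting the recursion \eqref{eq:dg_dxWI_recursion} from zero, the accumulated $\partial\mathbf{g}/\partial\tilde\delta_{o_j}$ therefore stays zero. The corresponding column of the intrinsic block of $\mathbf{H}_{k+1}$ in \eqref{eq:odom_linearized} is thus zero, and visual Jacobians never depend on $\mathbf{x}_{WI}$. Because the calibration block of \eqref{eq:state_transition} is the identity and uncoupled from all other states, $\mathbf{H}_k\bm\Phi_{(t_k,t_0)}\mathbf{n}=\mathbf{0}$ for $\mathbf{n}$ equal to the unit vector along any $\tilde\delta_{o_j}$. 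All four offsets are therefore unobservable for any trajectory.

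A possible subtlety is the loss of rank of $A$ under lateral motion, also noted in Remark~\ref{rem:naive_reduction}. That case only makes matters worse, so it does not affect the conclusion.
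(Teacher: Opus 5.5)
Your proof is correct and follows the paper's argument. The reduced-map steering Jacobian is $(\mathbf{b}_j^\top\bm\xi)[A^\dagger]_{:,j}$ plus a residual term, and both vanish at the true state, because $\mathbf{b}_j^\top\bm\xi$ is the lateral slip velocity and $\mathbf{e}_\perp=\mathbf{0}$; your Step~4 simply spells out the propagation through the preintegration recursion and the identity calibration block of $\bm\Phi$, which the paper compresses into a single sentence.
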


\begin{proof}
For the reduced map, the counterpart of \eqref{eq:jac_steering} is
$\partial\bm\xi/\partial\delta_{o_j} = (\mathbf{b}_j^\top\bm\xi)
[A^\dagger]_{:,j} - e_{\perp,j}(A^\top A)^{-1}\mathbf{b}_j$. With
$\mathbf{u}_j$ the contact point velocity \eqref{eq:contact_velocity},
\begin{equation}
    \mathbf{b}_j^\top\bm\xi = {\mathbf{d}_j^\perp}^\top\mathbf{u}_j
    \label{eq:lateral_slip}
\end{equation}
which is the lateral slip velocity of wheel $j$ and vanishes identically under
rolling without slipping. At the true state $\mathbf{e}_\perp = \mathbf{0}$, so
the second term vanishes also, and the four columns of $\mathbf{M}$ associated
with $\bm\delta_o$ are identically zero.
\end{proof}

The rolling measurements alone report only \emph{how fast} each wheel turns;
they never test whether the four steering angles are mutually consistent with a
single ICR, which is where steering offset
information resides. Retaining the lateral rows restores it: from
\eqref{eq:Cj} the offset enters through $[\bar A^\dagger]_{:,4+j}$ with
coefficient $-v_j$, non-zero whenever the wheel is rolling.

\begin{remark}
\label{rem:rank_crab}
The augmented map is also better conditioned: for $A$, a configuration with all
steering angles equal renders the first two columns parallel and $(A^\top
A)^{-1}$ undefined, whereas for $\bar A$ they remain independent for any
steering configuration. The pseudo-inverse is thus well defined in precisely
the lateral translation mode a 4WIS4WID platform is built to execute.
\end{remark}

\begin{lemma}
\label{lem:vins}
Under general motion, four directions are unobservable, corresponding to global
position and rotation about gravity, as in standard VINS~\cite{Hesch2014,
Yang2023TRO}.
\end{lemma}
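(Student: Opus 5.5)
We will exhibit the four standard VINS directions explicitly and show that they lie in the right null space of $\mathbf{M}$ in \eqref{eq:observability_matrix}. By \eqref{eq:state_transition}, each block row of $\mathbf{M}$ is $\mathbf{H}_k\bm\Phi_{(t_k,t_0)}$. Following~\cite{Hesch2014,Lee2020IROS}, we take the candidate null space $\mathbf{N}\in\mathbb{R}^{n\times4}$ with the following blocks:
\[
\mathbf{N}_{\theta_I} = \big[\,\mathbf{0}_3\ \ {}^{I_0}_G\mathbf{R}\,{}^G\mathbf{g}\,\big],\quad
\mathbf{N}_{p_I} = \big[\,\mathbf{I}_3\ \ -\lfloor{}^G\mathbf{p}_{I_0}\rfloor{}^G\mathbf{g}\,\big],\quad
\mathbf{N}_{v_I} = \big[\,\mathbf{0}_3\ \ -\lfloor{}^G\mathbf{v}_{I_0}\rfloor{}^G\mathbf{g}\,\big].
\]
The clone blocks repeat the IMU orientation and position blocks, as required by the error-state constraint that the initial IMU pose and the initial clone coincide. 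The feature block is $\big[\,\mathbf{I}_3\ \ -\lfloor{}^G\mathbf{p}_f\rfloor{}^G\mathbf{g}\,\big]$. The blocks for the biases, $\mathbf{x}_{WE}$, ${}^Ot_I$ and $\mathbf{x}_{WI}$ are all zero.

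The proof then splits along the rows of $\mathbf{H}_k$, namely the visual rows and the wheel rows.

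\emph{Visual rows.} These do not involve $\mathbf{x}_{WE}$, ${}^Ot_I$ or $\mathbf{x}_{WI}$. The dynamics of the inertial, clone and feature blocks in \eqref{eq:state_transition} are identical to standard VINS. Hence $\mathbf{H}^{vis}_k\bm\Phi_{(t_k,t_0)}\mathbf{N}=\mathbf{0}$ follows directly from~\cite{Hesch2014,Yang2023TRO}. The only additional check is that the identity block $\mathbf{I}_{26}$ maps the zero calibration components of $\mathbf{N}$ to zero.

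\emph{Wheel rows.} By \eqref{eq:odom_linearized}, the wheel Jacobian is $[\partial\mathbf{h}/\partial\tilde{\mathbf{x}}_I,\ \partial\mathbf{h}/\partial\tilde{\mathbf{x}}_C,\ \mathbf{H}_{WE},\ \mathbf{H}_t,\ -\partial\mathbf{g}/\partial\tilde{\mathbf{x}}_{WI}]$. Because $\mathbf{N}$ vanishes on the last three blocks, the 4WIS4WID intrinsic block, including the structure of Proposition~\ref{prop:per_wheel}, never enters the product. It therefore remains to show that the relative-pose prediction $\mathbf{h}$ is annihilated by the propagated $\mathbf{N}$. This is a geometric statement: $\mathbf{h}$ in \eqref{eq:h_extrinsics} depends only on the relative pose ${}^{I_k}_{I_{k+1}}$ between the current IMU state and the clone, and on the extrinsics. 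A common global translation, or a common rotation about ${}^G\mathbf{g}$, of both poses (and of the velocity) leaves that relative pose invariant. Concretely, after propagation we will use the identities $\bm\Phi_{I_{11}}{}^{I_0}_G\mathbf{R}\,{}^G\mathbf{g} = {}^{I_k}_G\mathbf{R}\,{}^G\mathbf{g}$ and the analogous position and velocity relations from~\cite{Hesch2014}. With these, the propagated direction is again the gauge direction evaluated at time $t_k$, and the Jacobian blocks of~\cite{Lee2020IROS} cancel pairwise.

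To show that these four directions exhaust the null space under general motion, we argue by rank. We will show that $\mathbf{M}$ has rank $n-4$ restricted to the inertial, clone and feature subspace, invoking the full-calibration VINS result~\cite{Yang2023TRO}. For the calibration columns we need linear independence, and here the statement must be read carefully. The gauge directions of Remark~\ref{rem:gauge} and the wheel degeneracies are handled separately in the sequel. We therefore interpret ``general motion'' as including the fixing of the $SE(2)$ reference of $\{O\}$, and we claim only that no further direction couples the calibration parameters with the four VINS directions.

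The main obstacle is this independence claim. A full rank computation of a matrix with $26$ calibration columns is impractical, so we will argue structurally. Nonzero yaw rate and sufficient steering diversity make $\bigcup_t\mathrm{rowspace}(\mathbf{C}_j(t))$ span $\mathbb{R}^4$ for each wheel by Proposition~\ref{prop:per_wheel}. Combined with the results of~\cite{Lee2020IROS} for $\mathbf{x}_{WE}$ and ${}^Ot_I$, this means the calibration columns add rank beyond the VINS subspace, modulo the gauge.
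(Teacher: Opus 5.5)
Your null-space construction is correct and follows the paper's argument. The paper's proof is the single observation that the null basis of~\cite{Lee2020IROS}, padded with zeros in the wheel-intrinsic block, still annihilates $\mathbf{M}$, because the wheel measurements are expressed in the local frame $\{O_k\}$ and are invariant to global translation and rotation about gravity. Your split into visual rows (standard VINS) and wheel rows is a more explicit version of the same reasoning. In the wheel rows $\mathbf{N}$ vanishes on $\mathbf{x}_{WE}$, ${}^Ot_I$ and $\mathbf{x}_{WI}$, so only the dependence of $\mathbf{h}$ on the relative pose and local velocities matters. This fully establishes the lemma in the sense the paper intends: these four directions lie in $\ker\mathbf{M}$.

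Your last two paragraphs try to show that these four directions exhaust the null space modulo the gauge. The paper makes no such claim, and literally could not, since Lemma~\ref{lem:gauge} adds three more directions under the same general motion. Your argument for it also does not work. The paper uses the spanning condition on $\bigcup_t\mathrm{rowspace}(\mathbf{C}_j(t))$ only in the valid direction: failure to span $\Rightarrow$ an unobservable direction. You use the converse, which does not follow.

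\begin{itemize}
\item Per-wheel spanning rules out only null directions confined to a single wheel's intrinsics.
\item It says nothing about combinations across wheels, with $\mathbf{x}_{WE}$ and ${}^Ot_I$, or with the inertial and clone states.
\item It ignores that the intrinsics reach $\mathbf{M}$ only through the sum of $\Phi_{tr,\tau}\Phi_{\bm\xi,\tau}H_{\bm\xi,\mathbf{x}_{WI},\tau}$ over each preintegration window, so the measurement row space is not the union of instantaneous row spaces.
\end{itemize}

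The gauge direction \eqref{eq:gauge_null_basis} is exactly such a cross-block cancellation, and it coexists with per-wheel spanning. Your argument therefore cannot distinguish ``gauge only'' from ``gauge plus something else.'' Separately, the rank you quote for the inertial, clone and feature subspace should be that subspace's dimension minus four, not $n-4$. Either drop the exhaustiveness claim, as the paper does, or replace it with an actual rank computation of $\mathbf{M}$.
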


\begin{proof}
The null basis of~\cite{Lee2020IROS}, extended by zeros in the wheel-intrinsic
block: the wheel measurements \eqref{eq:odometry_measurements} are expressed
entirely in the local frame $\{O_k\}$ and are invariant to global translation
and to rotation about gravity.
\end{proof}

\begin{lemma}
\label{lem:gauge}
Three further directions are unobservable \emph{under every motion},
corresponding to the $SE(2)$ freedom in placing $\{O\}$ on the robot body.
\end{lemma}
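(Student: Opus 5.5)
Our plan is to exhibit three explicit null vectors of $\mathbf{M}$ that are independent of the trajectory. We build them from a finite planar rigid motion of the odometry frame and then linearise. Let $\{O'\}$ be obtained from $\{O\}$ by a planar displacement $(\alpha,\mathbf{t})\in SE(2)$ about the body $z$-axis.

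\emph{Step 1: the wheel geometry.} We re-express the wheel geometry in $\{O'\}$:
\begin{equation*}
\mathbf{p}_j' = R(\alpha)^\top(\mathbf{p}_j-\mathbf{t}), \qquad \delta_j' = \delta_j-\alpha .
\end{equation*}
Since the encoder reading $\delta_{m_j}$ is unchanged, this forces $\delta_{o_j}' = \delta_{o_j}+\alpha$. The radii are unchanged.

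\emph{Step 2: the kinematic consequences.} By the twist transformation law, the true body twist in $\{O'\}$ is $\bm\xi' = \mathrm{Ad}^{-1}_{(\alpha,\mathbf{t})}\bm\xi$. The contact velocity $\mathbf{u}_j$ and the direction $\mathbf{d}_j$ rotate together by $R(\alpha)^\top$. Hence both constraints in \eqref{eq:wheel_constraints} still hold exactly, with the same $v_j$ and zero lateral slip. The preintegrated increment \eqref{eq:kinematic_model} is therefore the relative pose of $\{O'\}$ rather than of $\{O\}$.

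\emph{Step 3: the extrinsics.} We absorb the displacement into the extrinsics by setting ${}^{O'}_I\bar q = R_z(\alpha)^\top\,{}^O_I\bar q$ and ${}^{O'}\mathbf{p}_I$ equal to the correspondingly transformed vector. With this choice the prediction $\mathbf{h}$ of \eqref{eq:h_extrinsics}, which is computed from the IMU clones through the extrinsics, coincides with the new $\mathbf{g}$. The IMU, clone, feature and time-offset states are untouched, so the visual and inertial terms are unaffected. The measurement residual \eqref{eq:odom_residual} is thus identical for every $(\alpha,\mathbf{t})$ and every motion.

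\emph{Step 4: the null vectors.} We differentiate at $(\alpha,\mathbf{t})=0$ to obtain three vectors $\mathbf{n}_\alpha,\mathbf{n}_{t_x},\mathbf{n}_{t_y}$. They are zero in the inertial, clone, feature and ${}^Ot_I$ blocks. In the intrinsic block:
\begin{itemize}
\item for $\mathbf{n}_\alpha$, the entries are $\tilde{\mathbf{p}}_j = -J\mathbf{p}_j$ and $\tilde\delta_{o_j}=1$;
\item for $\mathbf{n}_{t_x},\mathbf{n}_{t_y}$, the entries are $\tilde{\mathbf{p}}_j=-\mathbf{e}_{x},-\mathbf{e}_{y}$.
\end{itemize}
Their $\mathbf{x}_{WE}$ blocks are the matching derivatives of the extrinsics. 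To confirm $\mathbf{M}\mathbf{n}=\mathbf{0}$ at the linear level, we check $\mathbf{H}_{k}\bm\Phi_{(t_k,t_0)}\mathbf{n}=\mathbf{0}$. The transition matrix $\bm\Phi$ of \eqref{eq:state_transition} acts as the identity on $\mathbf{x}_{etc}$ and leaves these vectors unchanged, so it suffices to show that $\mathbf{H}_k\mathbf{n}=\mathbf{0}$ for each wheel row block.

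\emph{Step 5: cancellation in the wheel block (main obstacle).} The main work is verifying this directly with Proposition~\ref{prop:per_wheel}. We must show that $-\partial\mathbf{g}/\partial\tilde{\mathbf{x}}_{WI}\,\mathbf{n}$ cancels $\partial\mathbf{h}/\partial\tilde{\mathbf{x}}_{WE}\,\mathbf{n}$.

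At the twist level, summing $\bar A^\dagger_j\mathbf{C}_j$ against the perturbations gives, for the rotation direction, a row pair
\begin{equation*}
\bigl[\,\omega\,{\mathbf{d}_j^\perp}^\top(-J\mathbf{p}_j);\; -v_j-\omega\,\mathbf{d}_j^\top(-J\mathbf{p}_j)\,\bigr].
\end{equation*}
This should equal $\bar A_j\,\delta\bm\xi$ for the infinitesimal adjoint change $\delta\bm\xi=[\omega\,(\cdot)\ldots]$. Concretely, we will verify the identity $\bar A\,\delta\bm\xi = \mathbf{C}\,\tilde{\mathbf{x}}_{WI}$ row by row, using $\mathbf{a}_j^\top\bm\xi=v_j$, $\mathbf{b}_j^\top\bm\xi=0$ and \eqref{eq:row_derivatives}. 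Because $\bar A$ has full column rank, $\bar A^\dagger\bar A=\mathbf{I}$, and so $\partial\bm\xi\cdot\mathbf{n}=\delta\bm\xi$ exactly.

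Integrating this change through \eqref{eq:dg_dxWI_recursion} should reproduce the conjugation of the relative pose by $(\alpha,\mathbf{t})$. This conjugation is what the extrinsic Jacobians of \cite{Lee2020IROS} produce for the same perturbation. Rather than expanding $\Phi_{tr,\tau}$ term by term, we expect the cleanest route to be the finite invariance argument of Steps 1–3, invoking the linear check only at the twist level.

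Independence of the three vectors is immediate from their intrinsic blocks. The translation vectors shift all $\mathbf{p}_j$ uniformly, while the rotation vector is the only one with nonzero steering-offset entries. They are also independent of the four VINS directions of Lemma~\ref{lem:vins}, which vanish in the intrinsic block.
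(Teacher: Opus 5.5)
Your proposal is correct and follows essentially the same route as the paper. It applies a finite $SE(2)$ change of odometry frame under which $\mathbf{p}_j$, $\delta_j$, $\delta_{o_j}$ and $\bm\xi$ transform so that every longitudinal and lateral wheel constraint is preserved, lets $\mathbf{x}_{WE}$ absorb the change in $\mathbf{h}$, and linearises to obtain the same null basis ($\tilde{\mathbf{r}}=\mathbf{0}$, $\tilde\delta_{o_j}=\alpha$, $\tilde{\mathbf{p}}_j=-\alpha J\mathbf{p}_j-\mathbf{q}$). Your additional points are correct refinements that the paper leaves implicit: $\bm\Phi$ acts as the identity on $\mathbf{x}_{etc}$; the preintegrated $\mathbf{g}$ becomes the relative pose of $\{O'\}$, matched by the new $\mathbf{h}$, rather than staying literally unchanged; you give a twist-level check via Proposition~\ref{prop:per_wheel}; and the three directions are independent of the VINS directions.
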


\begin{proof}
Let $\{O'\}$ have origin $\mathbf{q} = [q_x\ \ q_y]^\top$ and axes rotated by
$\beta$, expressed in $\{O\}$, and let $\bm\nu := [v_x\ \ v_y]^\top$. Then
\begin{equation}
    \resizebox{\columnwidth}{!}{$\displaystyle
    \mathbf{p}_j' = R(\beta)^\top(\mathbf{p}_j - \mathbf{q}),
    \quad
    \delta_j' = \delta_j - \beta,
    \quad
    \bm\xi' = \begin{bmatrix}
        R(\beta)^\top(\bm\nu + \omega J\mathbf{q}) \\ \omega
    \end{bmatrix}
    \label{eq:gauge_transform}$}
\end{equation}
and, since the physical encoder reading $\delta_{m_j} = \delta_j +
\delta_{o_j}$ is unaltered by a change of coordinates, $\delta_{o_j}' =
\delta_{o_j} + \beta$. Using $\mathbf{d}_j' = R(\beta)^\top\mathbf{d}_j$ and
$JR(\beta)^\top = R(\beta)^\top J$,
\begin{equation}
    v_j' = \mathbf{d}_j'^\top\big(\bm\nu' + \omega J\mathbf{p}_j'\big)
    = \mathbf{d}_j^\top\big[\bm\nu + \omega J\mathbf{p}_j\big] = v_j
    \label{eq:gauge_invariance}
\end{equation}
and identically $\mathbf{b}_j'^\top\bm\xi' = \mathbf{b}_j^\top\bm\xi$, so every
wheel measurement is unchanged. The prediction $\mathbf{h}(\cdot)$ is likewise
unchanged provided $\mathbf{x}_{WE}$ absorbs the frame change, which it can
since both ${}^O_I\bar q$ and ${}^O\mathbf{p}_I$ are estimated. Linearising
\eqref{eq:gauge_transform} for small $\beta = \alpha$ and $\mathbf{q}$ gives
the intrinsic block of the null basis,
\begin{equation}
    \tilde{\mathbf{r}} = \mathbf{0}, \;\;
    \tilde\delta_{o_j} = \alpha, \;\;
    \tilde x_{w_j} = \alpha y_{w_j} - q_x, \;\;
    \tilde y_{w_j} = -\alpha x_{w_j} - q_y
    \label{eq:gauge_null_basis}
\end{equation}
spanned by $\alpha$, $q_x$ and $q_y$.
\end{proof}

Lemma~\ref{lem:gauge} differs structurally from the degeneracies reported for
differential drive~\cite{Lee2020IROS} and skid-steering~\cite{Zuo2024TASE}
platforms, all of which are motion dependent: it holds under fully general
excitation, and has no differential drive analogue, the corresponding intrinsic
there being a scalar baselink rather than an $SE(2)$ object. The wheel geometry
is therefore observable only up to the choice of odometry frame. We fix the
gauge by holding $x_{w_1}$, $y_{w_1}$ and $\delta_{o_1}$ at their nominal
values and estimating the remaining thirteen intrinsics online; residual error
in these three is absorbed by $\mathbf{x}_{WE}$ without introducing bias.
Lemmas 2 and 3 identify seven motion independent unobservable directions under general motion: four standard VINS directions and three associated with the arbitrary placement of the odometry frame.

\subsection{Degenerate Motions}
\label{sec:degenerate}

The following follow from rank conditions on $\mathbf{C}_j$ in \eqref{eq:Cj},
and are summarised in Table~\ref{tab:degenerate_motions}.

\begin{lemma}
\label{lem:zero_yaw}
If $\omega \equiv 0$, all eight wheel positions are unobservable.
\end{lemma}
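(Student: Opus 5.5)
The plan is to work directly from Proposition~\ref{prop:per_wheel}. At the true state, the four intrinsics of wheel $j$ enter the recovered twist only through $\bar A^\dagger_j\mathbf{C}_j$. The wheel-position block of $\mathbf{C}_j$ is $\omega\begin{bmatrix}{\mathbf{d}_j^\perp}^\top\\ -\mathbf{d}_j^\top\end{bmatrix}$, which is proportional to the yaw rate. If $\omega(t)\equiv0$ on $[t_0,t_k]$, the last two columns of $\mathbf{C}_j(t)$ vanish at every instant. Hence $\partial\bm\xi/\partial\mathbf{p}_j^\top=\mathbf{0}$ for every $j$ and every $t$, and $\bigcup_t\mathrm{rowspace}(\mathbf{C}_j(t))$ has no component in the $(\tilde x_{w_j},\tilde y_{w_j})$ coordinates.

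Next I will carry this through the preintegration. By \eqref{eq:Phi_WI_n}, $\Phi_{WI,\tau}=\Phi_{\bm\xi,\tau}H_{\bm\xi,\mathbf{x}_{WI},\tau}$, so the position columns of $\Phi_{WI,\tau}$ are zero for every sub-interval. Starting the recursion \eqref{eq:dg_dxWI_recursion} from zero initial conditions, induction on $\tau$ then shows that the corresponding eight columns of $\partial\mathbf{g}/\partial\tilde{\mathbf{x}}_{WI}$ stay zero at $\tau=k{+}1$. The closed form \eqref{eq:preint_closed_form} is nonsingular at $\omega=0$ with $\rho=\Delta t$, so $\Phi_{\bm\xi,\tau}$ is finite and the product is well defined. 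The zero factor therefore carries through without any limiting argument.

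Then I will assemble $\mathbf{M}$ in \eqref{eq:observability_matrix}. The intrinsic block of $\mathbf{H}_{k}$ in \eqref{eq:odom_linearized} is $-\partial\mathbf{g}/\partial\tilde{\mathbf{x}}_{WI}$, and the visual Jacobians do not depend on $\mathbf{x}_{WI}$. In $\bm\Phi_{(t_k,t_0)}$ of \eqref{eq:state_transition} the calibration block is $\mathbf{I}_{26}$ and has no coupling into the other states. So the eight columns of $\mathbf{M}$ for $\mathbf{x}_w,\mathbf{y}_w$ are identically zero, and each unit vector in those coordinates, padded with zeros elsewhere, lies in the null space. This gives eight unobservable directions. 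Three of them are already the gauge directions of Lemma~\ref{lem:gauge} (restricted to positions), or are removed by fixing $x_{w_1},y_{w_1}$. The statement concerns all eight positions, so I will state it at the level of the full parameterisation and note that, after gauge fixing, the six estimated positions remain unobservable.

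The step I expect to need the most care is justifying that evaluating $\mathbf{C}_j$ at the true state is sufficient. The second term of \eqref{eq:jac_position}, $-(\bar A^\top\bar A)^{-1}\mathbf{e}_3\mathbf{e}_{\perp,j}^\top$, does not carry a factor of $\omega$. It vanishes only because $\mathbf{e}_\perp=\mathbf{0}$ at the true state, so I will cite the setting of Section~\ref{sec:jacobian_structure} explicitly. I will also remark on the physical reason: with $\omega=0$ the contact velocity $\mathbf{u}_j=\bm\nu$ in \eqref{eq:contact_velocity} does not depend on $\mathbf{p}_j$. The measurements are therefore invariant to arbitrary wheel displacements, not only to linearised ones, which confirms the result independently of the Jacobian bookkeeping.
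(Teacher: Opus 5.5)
Your argument is correct and follows the paper's own proof, which is just the observation that $\omega=0$ annihilates both position columns of $\mathbf{C}_j$ in Proposition~\ref{prop:per_wheel}; your propagation of those zero columns through the recursion \eqref{eq:dg_dxWI_recursion} and the identity calibration block of \eqref{eq:state_transition} into $\mathbf{M}$ makes explicit what that one line leaves implicit. One small slip in your side remark on gauge fixing: only two position coordinates ($x_{w_1}$, $y_{w_1}$) are fixed, the third fixed quantity being $\delta_{o_1}$, which is why six estimated positions remain (consistent with your count), and the gauge directions of Lemma~\ref{lem:gauge} are not pure-position vectors since they also move $\mathbf{x}_{WE}$ and, for the rotation, $\bm\delta_o$.
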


\begin{proof}
$\omega = 0$ annihilates both position columns of $\mathbf{C}_j$ for every $j$.
\end{proof}
This covers straight line and pure lateral motion where the wheel positions
enter only through the moment arm acting against the yaw rate, which without
rotation produces no signal.

\begin{lemma}
\label{lem:const_steering}
If all $\delta_j$ are held constant and $\omega \not\equiv 0$, two of the four intrinsics of every wheel
are unobservable, regardless of trajectory duration.
\end{lemma}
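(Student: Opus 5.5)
The plan is to work directly from the per-wheel factorization of Proposition~\ref{prop:per_wheel}. At every instant, the intrinsics of wheel $j$ affect the recovered twist, and hence the wheel rows of $\mathbf{M}$, only through $\bar A^\dagger_j\mathbf{C}_j$. Their columns in $\mathbf{M}$ are obtained by accumulating $\Phi_{\bm\xi,\tau}\bar A^\dagger_{j,\tau}\mathbf{C}_{j,\tau}$ through the recursion \eqref{eq:dg_dxWI_recursion}. Any vector $\mathbf{n}_j\in\mathbb{R}^4$ (in the block $[\tilde r_j\ \tilde\delta_{o_j}\ \tilde x_{w_j}\ \tilde y_{w_j}]$) with $\mathbf{C}_{j}(t)\mathbf{n}_j=\mathbf{0}$ for all $t$ is therefore annihilated by every wheel row. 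Since the visual rows and $\bm\Phi$ have zero entries in the intrinsic block, $\mathbf{n}_j$ padded with zeros lies in the null space of $\mathbf{M}$. It thus suffices to show that, under constant steering, $\bigcap_t \ker\mathbf{C}_j(t)$ is two-dimensional for every $j$. Equivalently, $\bigcup_t\mathrm{rowspace}(\mathbf{C}_j(t))$ spans only a fixed 2D subspace.

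The key step is to exploit rigidity of the motion when all $\delta_j$ are constant. If all steering angles are fixed and the wheels roll without slip, the lateral constraints $B\bm\xi=\mathbf{0}$ with fixed $B$ confine $\bm\xi(t)$ to the fixed one-dimensional kernel of $B$; generically $\operatorname{rank}B=2$, and since $\omega\not\equiv0$ the kernel is not purely translational. Hence $\bm\xi(t)=s(t)\bm\xi_0$ for a fixed direction $\bm\xi_0$, i.e. the ICR is fixed in the body. Then $v_j(t)=s(t)\,\mathbf{a}_j^\top\bm\xi_0$, $\omega(t)=s(t)\,\omega_0$ and $\omega_{md_j}(t)=v_j(t)/r_j=s(t)\,\mathbf{a}_j^\top\bm\xi_0/r_j$, while $\mathbf{d}_j$ and $\mathbf{d}_j^\perp$ are constant. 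Therefore $\mathbf{C}_j(t)=s(t)\,\mathbf{C}_j^0$ for a fixed matrix $\mathbf{C}_j^0$, and its row space is time-invariant. The union over any duration has dimension $\operatorname{rank}\mathbf{C}_j^0\le2$, so $\dim\ker\mathbf{C}_j^0\ge2$. This yields two unobservable directions per wheel, independent of trajectory length, because rescaling by $s(t)$ never adds new rows. I will write them explicitly. The first-row kernel condition $\omega_{md_j}\tilde r_j+\omega\,{\mathbf{d}_j^\perp}^\top\tilde{\mathbf{p}}_j=0$ and the second-row condition $-v_j\tilde\delta_{o_j}-\omega\,\mathbf{d}_j^\top\tilde{\mathbf{p}}_j=0$ give two independent null vectors. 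The first couples $\tilde r_j$ with the displacement of $\mathbf{p}_j$ along $\mathbf{d}_j^\perp$, a radius/moment-arm trade-off. The second couples $\tilde\delta_{o_j}$ with the displacement along $\mathbf{d}_j$. Geometrically, both correspond to sliding the wheel consistently with the fixed ICR.

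The main obstacle is the first step. I must rule out that these per-wheel null vectors are spurious, i.e. confirm that the recursion and $\Phi_{\bm\xi,\tau}$ cannot mix in information that breaks them. This is immediate, because each contribution is linear in $\mathbf{C}_{j,\tau}\mathbf{n}_j=\mathbf{0}$. I also need to handle the edge cases where $\operatorname{rank}\mathbf{C}_j^0<2$, namely $v_j=0$ for a non-rolling wheel, which only enlarges the kernel, and the gauge fixing of wheel $1$. For wheel $1$ I will note that the two directions still exist among its remaining intrinsics or trade against $\mathbf{x}_{WE}$, so the statement holds as phrased. I also need to make the rigidity claim $\bm\xi\parallel\bm\xi_0$ precise, including the degenerate case $\operatorname{rank}B<2$; there I will simply note that the kernel grows, or cite that the lateral constraints are exact at the true state.
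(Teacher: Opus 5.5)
Your proposal is correct and follows the paper's own argument. Constant steering fixes the ICR, so $\mathbf{C}_j(t)$ is a time-varying scalar times a fixed $\mathbf{C}_j^0$, and the accumulated row space is the fixed $\mathrm{rowspace}(\mathbf{C}_j^0)$. Your two kernel conditions reproduce exactly the null directions in \eqref{eq:const_steering_null}, and deriving the fixed ICR from $\ker B$ and checking that the recursion and $\bm\Phi$ preserve the null vector just make explicit what the paper takes for granted. Two side remarks are off, though neither affects the intended layout:
\begin{itemize}
\item If $\operatorname{rank}B<2$ (all wheel axle lines coincident), the ICR is \emph{not} fixed. Then $v_j$ and $\omega$ vary independently and $\bigcup_t\mathrm{rowspace}(\mathbf{C}_j(t))$ can fill $\mathbb{R}^4$, so the kernel shrinks rather than grows. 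That case must be excluded by assumption, as the paper implicitly does.
\item With $\omega\neq0$, $v_j=0$ does not drop $\operatorname{rank}\mathbf{C}_j^0$ below $2$, since the position block is $\omega$ times an orthogonal matrix. It only moves the kernel to $\mathrm{span}\{\tilde r_j,\tilde\delta_{o_j}\}$.
\end{itemize}
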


\begin{proof}
A constant steering configuration fixes the ICR
$\mathbf{c}$ in the body frame, so $\lambda_j := v_j/\omega = \|\mathbf{p}_j -
\mathbf{c}\|$ and $\eta_j := \omega_{md_j}/\omega = \lambda_j/r_j$ are constant
and $\mathbf{C}_j(t) = \omega(t)\,\mathbf{C}_j^0$ with $\mathbf{C}_j^0$ fixed.
The accumulated row space is the fixed 2D
$\mathrm{rowspace}(\mathbf{C}_j^0)$, whose null directions, parameterised by
$(\tilde x_{w_j}, \tilde y_{w_j})$, are
\begin{equation}
    \resizebox{\columnwidth}{!}{$\displaystyle
    \tilde r_j = \frac{\sin\delta_j\,\tilde x_{w_j}
        - \cos\delta_j\,\tilde y_{w_j}}{\eta_j},
    \quad
    \tilde\delta_{o_j} = -\frac{\cos\delta_j\,\tilde x_{w_j}
        + \sin\delta_j\,\tilde y_{w_j}}{\lambda_j}
    \label{eq:const_steering_null}$}
\end{equation}
\end{proof}
This is the most restrictive result in practice, excluding straight line, lateral
and constant radius circular motion, trajectories most naturally chosen
for calibration. Increasing speed or duration does not help, since $\omega(t)$
scales $\mathbf{C}_j$ without altering its row space.

\begin{table}[t]
\centering
\caption{Degenerate motions and the corresponding unobservable calibration
parameters.}
\label{tab:degenerate_motions}
\begin{tabular}{ll}
\hline
\textbf{Motion} & \textbf{Unobservable} \\
\hline
General motion & 4 inertial $+$ 3 gauge \\
Zero yaw rate ($\omega \equiv 0$) & $\mathbf{x}_w$, $\mathbf{y}_w$ \\
Constant steering configuration & 2 per wheel (8 total) \\
Wheel $j$ not rolling & $r_j$, $\delta_{o_j}$ \\
Pure translation & ${}^O\mathbf{p}_I$ \\
One axis rotation & ${}^O\mathbf{p}_I$ along axis \\
Constant local $\mathbf{v}$, $\bm\omega$ & ${}^Ot_I$ \\
No motion & all calibration parameters \\
\hline
\end{tabular}
\vspace{-12pt}
\end{table}

\section{SIMULATION VALIDATION}
\label{sec:simulation}

The analytical results of Section~\ref{sec:observability} establish the
observable and unobservable directions of the proposed system. Simulation is
used here to verify that the estimator exhibits the predicted convergence and
degeneracy behaviour. A Monte-Carlo simulator generates IMU, camera, and
eight channel wheel encoder measurements together with ground truth
calibration parameters. The camera, IMU, and wheel-encoder rates are 10, 200, and 100 Hz, respectively. The MSCKF uses 11 clones and at most 100 features per update with \(\sigma_{px}=1\) px. Drive- and steering-encoder noise standard deviations are \(0.001\) rad/s and \(0.001\) rad, respectively, with lateral pseudo-noise \(\sigma_{\mathrm{lat}}=0.001\). For the perturbed initialization, wheel radii are offset by 1\%, wheel positions by 0.01 m, steering offsets and extrinsic orientation by 0.01 rad, extrinsic position by 0.1 m, and the temporal offset by 0.01 s. Six trajectories are considered:
the ICR-sweep excitation trajectory, straight line,
lateral, circular, stationary, and figure eight motion.
Following Section~\ref{sec:lateral_necessity}, $x_{w_1}$, $y_{w_1}$, and
$\delta_{o_1}$ are fixed to remove the odometry frame gauge freedom.

\subsection{Calibration and Degeneracy Validation}

Fig.~\ref{fig:exp1} shows the estimation errors and $3\sigma$ bounds for six
Monte-Carlo runs along the ICR-sweep trajectory. The small number of runs makes this figure ilustrative (not to be used as the primary statistical validation). Statistical consistency/accuracy is presented in Table~\ref{tab:degenerate_motions} (20 runs) All estimated calibration
parameters converge toward their ground truth values within the reported
bounds. The wheel radii converge faster initially than the wheel positions,
consistent with \eqref{eq:Cj}, since their sensitivities scale with
$\omega_{md_j}$ and $\omega$, respectively.
\begin{figure}[t]
    \centering
    \includegraphics[width=\columnwidth]
    {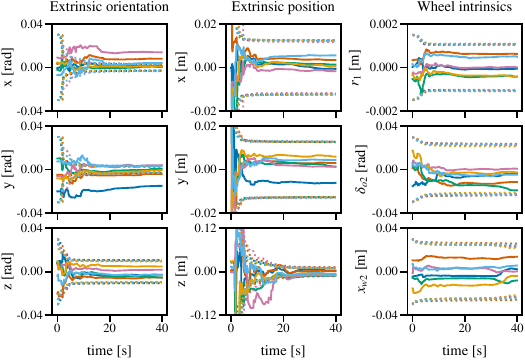}
    \caption{Estimation error (solid) and $3\sigma$ bound (dotted) for six
    Monte-Carlo runs along the ICR-sweep trajectory. Shown are the
    IMU-odometry extrinsics and representative wheel intrinsics
    ($r_1$, $\delta_{o_2}$, $x_{w_2}$).}
    \label{fig:exp1}
\end{figure}
The predicted degeneracies were also reproduced numerically. For
straight line and lateral motion ($\omega\equiv0$), the uncertainty of the
wheel position parameters does not contract, confirming
Lemma~\ref{lem:zero_yaw}. Under constant steering, the covariance projected
onto the two null directions predicted by the corresponding observability
analysis remains unchanged, while uncertainty in the orthogonal complement
decreases. A stationary platform leaves all calibration parameters
unobservable, as summarized in Table~\ref{tab:degenerate_motions}.
\subsection{Estimation Accuracy and Consistency}
Table~\ref{tab:rpe} reports relative pose error and mean NEES over
20 Monte-Carlo runs. With accurate initialization, enabling online
calibration has little effect. In contrast, incorrect fixed calibration
causes severe degradation, whereas online calibration recovers performance
close to the accurately initialized case. These results show that the proposed
calibration is particularly beneficial when the nominal kinematic parameters
are inaccurate.

With calibration enabled the pose NEES is $3.56$ and $3.72$ against a
reference value of $6$, i.e.\ mildly conservative, since a minimum noise floor
is added to the wheel relative pose covariance. Disabling calibration raises
it to $10.39$, the unmodelled parameter error not being represented in the
covariance; the value of $170.75$ reflects divergence in 5 of 20 runs
(median $2.03$) rather than a consistency level.

\begin{table}[t]
\centering
\small
\caption{Relative pose error (RPE) and mean NEES over 20 Monte-Carlo runs.}
\label{tab:rpe}
\resizebox{\columnwidth}{!}{%
\begin{tabular}{lrrrrr}
\toprule
Configuration &
RPE$_{10}$ [deg] &
RPE$_{10}$ [m] &
RPE$_{25}$ [deg] &
RPE$_{25}$ [m] &
NEES \\
\midrule
True init., calib. ON  & 0.014 & 0.0045 & 0.021 & 0.0055 & 3.56 \\
True init., calib. OFF & 0.050 & 0.0071 & 0.081 & 0.0111 & 10.39 \\
Bad init., calib. ON   & 0.014 & 0.0046 & 0.018 & 0.0060 & 3.72 \\
Bad init., calib. OFF  & 0.602 & 20.9823 & 1.465 & 47.4002 & 170.75 \\
\bottomrule
\end{tabular}%
}
\vspace{-12pt}
\end{table}

\section{EXPERIMENTAL VALIDATION}
The entire VIWO system was additionally validated on a real 4WIS4WID mobile robot. The robot is equipped with eight Dynamixel XH430-V210-T motors that have the option of synchronous command transmission and reading of velocity and position values. For VIWO, an Intel RealSense T265 was used, which has a stereo pair of fisheye cameras and an IMU, whose intrinsic and extrinsic parameters were calibrated using Kalibr~\cite{Furgale2013}. The ground truth position of the robot was measured using the OptiTrack system.

Figure~\ref{fig:experimental_results} shows the trajectories of wheel odometry, VIWO with calibration (VIWO w. calib), VIWO without calibration (VIWO wo. calib), and VIO. The root mean square errors of the listed trajectories are: VIWO w. calib 0.212/1.64, VIWO wo. calib 0.343/3.04, VIO 0.566/2.06, and wheel odometry 0.774/17.46 (meters/degrees). The initial values of the robot parameters, as well as the final ones, can be seen in the Table~\ref{tab:calibration_table}. The parameters converged with respect to the initial values by a small amount, but such a change can have a much greater influence over longer distances. The influence of the parameters is visible from the graphs and the RMS error values.
\begin{figure}[t]
  \centering
  \includegraphics[width=\columnwidth]{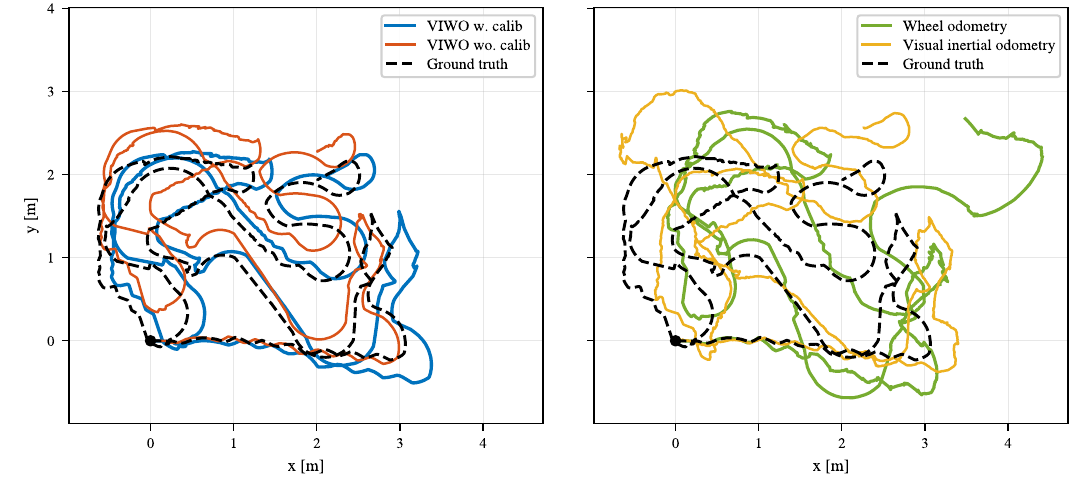}
  \caption{Experimental results}
  \label{fig:experimental_results}
\end{figure}

\begin{table}[t]
  \centering
  \caption{Values of the calibration parameters before and after online calibration for real world experiment}
  \label{tab:calibration_table}
  \resizebox{\columnwidth}{!}{%
  \begin{tabular}{lcc}
    \toprule
    \textbf{Parameter} & \textbf{Before} & \textbf{After} \\
    \midrule
    wheel radii $r$ [mm] & [25.6, 25.6, 25.6, 25.6] & [26.3, 26.3, 28.1, 26.8] \\
    wheel positions $x_w$ [m] & [-0.1125, -0.1125, 0.1125] & [-0.1029, -0.1029, 0.1454] \\
    wheel positions $y_w$ [m] & [0.1125, -0.1125, -0.1125] & [0.1267, -0.1216, -0.1216] \\
    steering offsets $\delta_o$ [rad] & [0.0, 0.0, 0.0] & [0.0168, -0.0098, -0.0633] \\
    Ext. Pos [m] & [0.1457, 0.0279, 0.0576] & [0.1696, 0.0192, 0.0244] \\
    Ext. Ori [rad] & [1.2085, 1.2093, 1.2091] & [1.2498, 1.2060, 1.2206] \\
    Time offset [s] & 0.00000 & -0.00656 \\
    \bottomrule
  \end{tabular}}
  \vspace{-12pt}
\end{table}

\section{CONCLUSION AND FUTURE WORK}
\label{sec:conclusion}
In this paper, we developed a tightly coupled MSCKF based VIWO system for 4WIS4WID mobile robots that fuses camera, IMU, and wheel encoder measurements while performing online calibration. The estimator simultaneously estimates the spatiotemporal odometry extrinsics and thirteen kinematic parameters using a redundant 2D wheel odometry model using longitudinal rolling and lateral no-slip constraints. Observability analysis showed that a drive only model leaves all steering offsets unobservable, while the proposed model restores their observability and increases measurement redundancy. In addition to the standard VINS unobservable directions, three motion independent directions related to the arbitrary $SE(2)$ odometry frame placement were identified, together with several degenerate motions and the excitation conditions required for calibration. The system was validated in simulation (on 20 Monte Carlo runs) and real world experiment on a 4WIS4WID robot.

Future work will focus on validation on vertical surfaces and over longer travelled distances, where calibration errors are expected to have a stronger effect. We also plan to incorporate robot dynamics to account for wheel slip and violations of the no-slip assumptions used by the current kinematic model.
\section*{ACKNOWLEDGMENT}
The authors acknowledge the support of the VANGUARD project, funded by the European Union under the Competitiveness and Cohesion Programme 2021–2027.

\bibliographystyle{IEEEtran}
\bibliography{refrences}

\end{document}